\def\eqref#1{equation~\ref{#1}}
\def\1{\bm{1}}
\def\vw{{\bm{w}}}
\DeclareMathAlphabet{\mathsfit}{\encodingdefault}{\sfdefault}{m}{sl}
\SetMathAlphabet{\mathsfit}{bold}{\encodingdefault}{\sfdefault}{bx}{n}
\newtheorem{theorem}{Theorem}[section]
\newtheorem{corollary}[theorem]{Corollary}
\newcommand{\kibitz}[2]{\ifnum\Comments=1{\color{#1}{#2}}\fi}
\newcommand{\shorte}{\textup{\texttt{=}}}
\newcommand{\name}{\textsc{ReDOR}}
\newcommand{\rdcshort}{\mathtt{rdc}}
\newcommand{\namep}{$\mathtt{Prioritized}$}
\newcommand{\nameo}{$\mathtt{Complete\ Dataset}$}
\newcommand{\namer}{$\mathtt{Random}$}
\newcommand{\namei}{$\mathtt{Single\ Round}$}
\newcommand{\nameq}{$\mathtt{Q\ Target}$}
\title{Fewer May Be Better: Enhancing Offline Reinforcement Learning with Reduced Dataset}
\author{
	Yiqin Yang$^1$, Quanwei Wang$^2$, Chenghao Li$^2$, Hao Hu$^2$, Chengjie Wu$^2$, Yuhua Jiang$^2$, \\ 
	\textbf{Dianyu Zhong$^2$, Ziyou Zhang$^2$, Qianchuan Zhao$^2$, Chongjie Zhang$^3$, Bo Xu$^1$\footnotemark[2]} \\
    $^1$The Key Laboratory of Cognition and Decision Intelligence for Complex Systems, \\ 
    \ \ Institute of Automation, Chinese Academy of Sciences \\
    $^2$Tsinghua University \\ 
    $^3$Washington University in St. Louis \\
    \texttt{yiqin.yang@ia.ac.cn}
}
\begin{document}

\maketitle

\renewcommand{\thefootnote}{\fnsymbol{footnote}}
\footnotetext[2]{Corresponding Author}

\begin{abstract}
Offline reinforcement learning (RL) represents a significant shift in RL research, allowing agents to learn from pre-collected datasets without further interaction with the environment. A key, yet underexplored, challenge in offline RL is selecting an optimal subset of the offline dataset that enhances both algorithm performance and training efficiency. Reducing dataset size can also reveal the minimal data requirements necessary for solving similar problems.
In response to this challenge, we introduce ReDOR (Reduced Datasets for Offline RL), a method that frames dataset selection as a gradient approximation optimization problem. We demonstrate that the widely used actor-critic framework in RL can be reformulated as a submodular optimization objective, enabling efficient subset selection. To achieve this, we adapt orthogonal matching pursuit (OMP), incorporating several novel modifications tailored for offline RL.
Our experimental results show that the data subsets identified by ReDOR not only boost algorithm performance but also do so with significantly lower computational complexity.
\end{abstract}

\section{Introduction}

Offline reinforcement learning (RL)~\citep{levine2020offline} has marked a paradigm shift in artificial intelligence. Unlike traditional RL~\citep{sutton2018reinforcement} that relies on real-time interaction with the environment, offline RL utilizes pre-collected datasets to learn decision-making policies~\citep{yang2021believe,janner2021offline}. This approach is increasingly favored for its practicality in scenarios where real-time data acquisition is impractical or could damage physical assets. Moreover, offline learning can avoid the significant time and complexity involved in online sampling and environment construction. This streamlines the learning process and expands the potential for deploying RL across a more comprehensive array of applications~\citep{yuan2022offline, zhou2023real, nambiar2023deep}.


However, offline reinforcement learning relies on large pre-collected datasets, which can result in substantial computational costs during policy learning~\citep{lu2022challenges}, especially when the algorithm model requires extensive parameter tuning~\citep{sharir2020cost}.
Moreover, additional data may not always improve performance, as suboptimal data can exacerbate the distribution shift problem, potentially degrading the policy~\citep{hu2022role}.
In this work, we attempt to explore effective offline reinforcement learning methods through a data subset selection mechanism and address the following question:

\begin{center}
    \it{How do we determine the subset of the offline dataset to improve algorithm performance and accelerate algorithm training?}
\end{center}




In this paper, we formulate the dataset selection challenge as a gradient approximation optimization problem. 
The underlying rationale is that if the weighted gradients of the TD loss on the reduced dataset can closely approximate those on the original dataset, the dataset reduction process should not lead to significant performance degradation.
However, directly solving this data selection problem is NP-Hard~\citep{killamsetty2021glister,killamsetty2021retrieve}.
To this end, we first prove that the common actor-critic framework can be transformed into a submodular optimization problem~\citep{mirzasoleiman2020coresets}.
Based on this insight, we adopt the Orthogonal Matching Pursuit~(OMP)~\citep{elenberg2018restricted} to solve the data selection problem.
On the other hand, different from supervised learning, target values in offline RL evolve with policy updates, resulting in unstable gradients that affect the quality of the selected data subset.
To solve this issue, we stabilize the learning process by making several essential modifications to the OMP.




Theoretically, we provide a comprehensive analysis of the convergence properties of our algorithm and establish an approximation bound for its solutions. 
We then prove the objective function can be upper-bounded if the selected data is sufficiently diverse.
Empirically, we evaluate \name~on the D4RL benchmark~\citep{fu2020d4rl}. Comparison against various baselines and ablations shows that the data subsets constructed by the \name~can significantly improve algorithm performance with low computationally expensive.
To the best of our knowledge, our work is the first study analyzing the reduced dataset in offline reinforcement learning.

\section{Related Works}
\textbf{Offline Reinforcement Learning.}\ \
Current offline RL methods attempted to constrain the learned policy and behavior policy by limiting the action difference~\citep{fujimoto2019off}, adding KL-divergence~\citep{yang2021believe, nair2020awac,peng2019advantage,wu2019behavior}, regularization~\citep{kumar2019stabilizing}, conservative estimates~\citep{yang2023flow,ma2021offline, kumar2020conservative,ma2021conservative} or penalizing uncertain actions~\citep{janner2019trust,yu2021combo,kidambi2020morel}.
These studies provide a solid foundation for implementing and transferring reinforcement learning to real-world tasks.

\textbf{Offline Dataset.}
Some works attempted to explore which dataset characteristics dominate in offline RL algorithms~\citep{schweighofer2021understanding, swazinna2021measuring, chen2020bail, yue2022boosting} or investigate the data generation~\citep{yarats2022don}.
Recently, some researchers attempted to solve the sub-optimal trajectories issue by constraining policy to good data rather than all actions in the dataset~\citep{hong2023beyond} or re-weighting policy~\citep{hong2023harnessing}.
However, limited research has addressed considerations related to the reduced dataset in offline RL.

\textbf{Data Subset Selection.}\ \ The research on identifying crucial samples within datasets is concentrated on supervised learning.
Some prior works use uncertainty of samples~\citep{coleman2019selection,paul2021deep} or the frequency of being forgotten~\citep{toneva2018empirical} as the proxy function to prune the dataset.
Another research line focuses on constructing weighted data subsets to approximate the full dataset~\citep{feldman2020core}, which often transforms the subset selecting to the submodular set cover problem~\citep{wei2015submodularity,kaushal2019learning}.
These studies establish the importance of selecting critical samples from datasets for practical training. 
However, unlike supervised learning, target values in offline RL evolve as policies update, leading to unstable gradients that significantly complicate the learning process.

\section{Background}\label{sec: preliminary}
\textbf{Reinforcement Learning~(RL)} deals with Markov Decision Processes~(MDPs). A MDP can be modeled by a tuple~($S, A, r, p, \gamma$), with the state space $S$, the action space $A$, the reward function $r(s, a)$, the transition function $p(s'|s,a)$, and the discount factor $\gamma$.
We follow the common assumption that the reward function is positive and bounded: $\forall s \in S, a\in A, 0\leq r(s,a) \leq R_{\rm max}$, where $R_{\rm max}$ is the maximum possible reward. RL aims to find a policy $\pi(a\mid s)$ that maximizes the cumulative discounted return:
\begin{equation}
    \pi^* = \arg\max_{\pi} J(\pi) = \arg\max_{\pi}\mathbb{E}_{\pi}[\sum_{t=0}^{H}\gamma^t r(s_t, a_t)],
\end{equation}
where $H$ is the horizon length.
For any policy $\pi$, the action value function is $Q^{\pi}(s_t,a_t)=\mathbb{E}_{\pi}[\sum_{k=0}^{H-t}\gamma^k r(s_{t+k}, a_{t+k})| s_t\shorte s, a_t\shorte a]$.
The state value function is $V^{\pi}(s_t)=\mathbb{E}_{\pi}[\sum_{k=0}^{H-t}\gamma^k r(s_{t+k}, a_{t+k})| s_t\shorte s]$.
It follows from the Bellman equation that $V^{\pi}(s_t) = \sum_{a\in A}\pi(a|s)Q^{\pi}(s_t,a_t)$.

\textbf{Offline RL} learns a policy $\pi$ without interacting with an environment. Rather, the learning is based on a dataset $\mathcal{D}$ generated by a behavior policy $\pi_{\beta}$. One of the major challenges in offline RL is the issue of distributional shift~\citep{fujimoto2019off}, where the learned policy is different from the behavioral policy. Existing offline RL methods apply various forms of regularization to limit the deviation of the current learned policy:
\begin{equation}
    \pi^* = \arg\max_{\pi}\left[ J_{\mathcal{D}}(\pi) - \alpha D(\pi, \pi_{\beta})\right],
    \label{eq: offline opt}
\end{equation}
where $J_{\mathcal{D}}(\pi)$ is the cumulative discounted return of policy $\pi$ on the empirical MDP induced by the dataset $\mathcal{D}$, and $D(\pi, \pi_{\beta})$ is a divergence measure between $\pi$ and $\pi_{\beta}$. In this paper, we base our study on TD3+BC~\citep{fujimoto2021minimalist}, which follows this regularized learning scheme.


We introduce the concept of \textbf{offline data subset selection}.
Specifically, let $\mathcal{D}=\{(s_{i}, a_{i}, r_i, s_{i}')\}_{i=1}^{M}$ denote the complete offline dataset, and let $\mathcal{S}\subseteq\mathcal{D}$, indexed by $j$, represent the reduced dataset. 
We formulate the subset selection as:
\begin{align}
\label{eq: opt prob}
    \mathcal{S}^* = \mathop{\arg\min}\limits_{\mathcal{S}\subseteq \mathcal{D}}|\mathcal{S}|, \quad
    \text{s.t.} \quad J(\pi_{\mathcal{S}}) \geq J(\pi_{\mathcal{D}}) + c,
\end{align}
where $\pi_{\mathcal{D}}$ and $\pi_{\mathcal{S}}$ are the policy trained using Eq.~\ref{eq: offline opt} with dataset $\mathcal{D}$ and $\mathcal{S}$, respectively. 
$c\geq0$ is the policy performance gain.

\textbf{Compact Subset Selection} \label{sec:omp_description} for offline reinforcement learning remains largely under-explored in existing literature. However, research efforts have been directed toward reducing the size of training samples in other deep learning fields like supervised learning~\citep{killamsetty2021grad, killamsetty2021glister, mirzasoleiman2020coresets}.


Specifically, there are some research explorations on transforming the subset selection problem into the submodular set cover problem~\citep{mirzasoleiman2020coresets}. The submodular set cover problem is defined as finding the smallest set $\mathcal{S}$ that achieves utility $\rho$:
\begin{align}
    \mathcal{S}^* = \mathop{\arg\min}\limits_{\mathcal{S}\subseteq \mathcal{D}}|\mathcal{S}|, \quad \text{s.t.} \quad F(\mathcal{S})\geq \rho,
\end{align}
where we slightly abuse the notation and use $\mathcal{D}$ to denote the complete supervised learning dataset. We require $F$ to be a \emph{submodular} function like set cover and concave cover modular~\citep{iyer2021submodular}. 
A function $F$ is submodular if it satisfies the \emph{diminishing returns property}: for subsets $\mathcal{S}\subseteq\mathcal{T}\subseteq\mathcal{D}$ and $j\in \mathcal{D} \setminus \mathcal{T}$, $ F(j\mid \mathcal{S})\triangleq F(\mathcal{S}\cup j)-F(\mathcal{S})\geq F(j\mid \mathcal{T})$ and the \emph{monotone property}: $F(j\mid \mathcal{S})\geq 0$ for any $j\in \mathcal{D} \setminus \mathcal{S}$ and $\mathcal{S}\subseteq \mathcal{D}$.





\section{Method}
For the data subset selection problem, RL and supervised learning are significantly different in two aspects:
(1) In supervised learning, the loss value is the primary criterion for selecting data.
However, the loss value in RL is unrelated to the policy performance.
Therefore, we need to consider new criteria for selecting data in RL.
(2) Compared with the fixed learning objective in supervised learning, the learning objective in offline RL evolves as policies update, significantly complicating the data selection process.
To solve these issues, we first formulate the data selection problem in offline RL as the constrained optimization problem in Sec.~\ref{subsec: grad approx optim}.
Then, we present how to effectively solve the optimization problem in Sec.~\ref{sec: offline omp}.
Finally, we balance the data quantity with performance in Sec.~\ref{sec:method:outer}.
The algorithm framework is shown in Algorithm~\ref{alg: offline data selection}.

\subsection{Gradient Approximation Optimization}
\label{subsec: grad approx optim}

We first approximate the optimization problem \ref{eq: opt prob}, using the Q-function $Q^{\pi}(s,a)$ as the performance measure $J(\pi) = Q^{\pi}(s_0, a_0)$ and requiring that $Q^{\pi_{\mathcal{D}}}$ and $Q^{\pi_{\mathcal{S}}}$ to be approximately equal for any action-state pair $(s,a)$:
\begin{align}
\label{equ:opt_prob-Q}
    \mathcal{S}^* = \mathop{\arg\min}\limits_{\mathcal{S}\subseteq \mathcal{D}}|\mathcal{S}|, \quad
    \text{s.t.} \quad \|Q^{\pi_{\mathcal{D}}}(s,a) - Q^{\pi_{\mathcal{S}}}(s,a) \|_\infty \leq \delta.
\end{align}

We use \emph{gradient approximation optimization} to deal with the constraint in the optimization problem~\ref{equ:opt_prob-Q}. Suppose that Q-functions are represented by networks with learnable parameters $\theta$ and updated by gradients of loss function $\mathcal{L}(\theta)$, e.g., the TD loss~\citep{mnih2015human}. If we can identify a reduced training set $\mathcal{S}$ such that the weighted sum of the gradients of its elements closely approximates the full gradient over the complete dataset $\mathcal{D}$, then we can train on $\mathcal{S}$ and converge to a Q-function that is nearly identical to the one trained on $\mathcal{D}$.

Formally, 
\begin{align}
    \mathcal L(\theta)=\sum_{i \in \mathcal{D}}\mathcal L^i(\theta) = \sum_{i \in \mathcal{D}}\mathcal L_{\mathtt{TD}}(s_{i}, a_{i}, r_i, s'_{i}, \theta)
\end{align}
is the standard Q-learning TD loss, and
\begin{align}
\mathcal L_{\rdcshort}(\vw,\theta) = \sum\nolimits_{i \in \mathcal{S}} w_i\mathcal L^i(\theta)
\end{align}
is the loss on the reduced subset $\mathcal{S} \subseteq \mathcal{D}$.
In order to better approximate the gradient for the full dataset, we use the weighted data subset. Specifically, $w_i$ is the per-element weight in coreset $\mathcal{S}$. During the learning process, we approximate the entire dataset's gradient by multiplying the samples' gradient in coreset by their weights.
We define the following error term:
\begin{align}
   \operatorname{Err}\left(\vw, \mathcal{S}, \mathcal L, \theta\right) =  \| \sum_{i\in \mathcal{S}}w_{i}\nabla_{\theta}\mathcal L^i\left(\theta\right) - \nabla_{\theta} \mathcal L\left(\theta\right)\|_2.
   \label{eq: omp error}
\end{align}
Minimizing Eq.~\ref{eq: omp error} ensures the dataset selection procedure can maintain or even improve the policy performance.
Similarly, define the regularized version of $\operatorname{Err}\left(\vw, \mathcal{S}, \mathcal L, \theta\right)$ as
\begin{align}
    \operatorname{Err}_{\lambda}\left(\vw, \mathcal{S}, \mathcal L, \theta\right) =
    \operatorname{Err}\left(\vw, \mathcal{S}, \mathcal L, \theta\right) + \lambda \|\vw\|_2^2.
\label{eq: regular opt prob}
\end{align}
Then, the optimization problem \ref{eq: opt prob} is transformed into:
\begin{align}
\vw, \mathcal S= \mathop{\arg\min}\limits_{\vw, \mathcal{S}}\operatorname{Err}_{\lambda}\left(\vw, \mathcal{S}, \mathcal L, \theta\right).
\label{eq: gradient approx}
\end{align}








\subsection{Orthogonal Matching Pursuit for Offline RL}\label{sec: offline omp}
Directly solving problem \ref{eq: gradient approx} is NP-hard~\citep{killamsetty2021glister,killamsetty2021retrieve} and computationally intractable.
To solve the issue, we consider using the iterative approach, which selects data one by one to reduce $\operatorname{Err}_{\lambda}\left(\vw, \mathcal{S}, \mathcal L, \theta\right)$.
To ensure newly selected data are informative, we prove the optimized problem~\ref{eq: gradient approx} can be transformed into the submodular function.

Specifically, we introduce a constant $L_{\max}$ and define 
$F_{\lambda}(\mathcal{S})=L_{\max} - \min_{\vw}\operatorname{Err}_{\lambda}\left(\vw, \mathcal{S}, \mathcal L, \theta\right)$. 
Then, we consider the common actor-critic framework in data subset selection, which has an actor-network $\pi_{\phi}(s)$ and a critic network $Q_{\theta}(s, a)$ that influence the TD loss and thus the function $F_{\lambda}(\mathcal{S})$. Therefore, the submodularity analysis of  $F_{\lambda}(\mathcal{S})$ involves two components: $F_\lambda^Q(\mathcal{S})$ that depends on the critic loss $\mathcal{L}_Q(\theta)$, and $F_\lambda^\pi(\mathcal{S})$ that depends on the actor loss $\mathcal{L}_\pi(\phi)$. The following theorem shows that both $F_\lambda^Q(\mathcal{S})$ and $F_\lambda^\pi(\mathcal{S})$ are weakly submodular.




\begin{restatable}[Submodular Objective]{theorem}{submodular}
    For $|\mathcal S| \leq N$ and sample $(s_i,a_i,r_i,s'_i)\in \mathcal{D}$, suppose that the TD loss and gradients are bounded: $|\mathcal{L}^i(\theta)| \leq U_\mathtt{TD}$, $ \|\nabla_\theta Q_\theta(s_i,a_i)\|_2 \leq U_{\nabla Q}$, $\|\nabla_{\pi_{\phi}(s_i)}Q_\theta(s_i,\pi_{\phi}(s_i))\|_2 \leq U_{\nabla a}$, $\|\pi_{\phi}(s_i)-a_i\|_2 \leq U_a$, $\|\pi_{\phi}(s_i)\|_2\leq U_\pi$, and $\|\nabla_\phi \pi_{\phi}(s_i)\|_2 \leq U_{\nabla\pi}$, then $F_\lambda^Q(\mathcal{S})$ is $\delta$-weakly submodular, with
    \begin{align}
        \delta \geq \frac{\lambda}{\lambda+4 N (U_\mathtt{TD}U_{\nabla Q})^2},
    \end{align}
    and $F_\lambda^\pi(\mathcal{S})$ is $\delta$-weakly submodular, with 
    \begin{align}
        \delta \geq \frac{\lambda}{\lambda + N(U_{\nabla a}/\alpha+2U_a U_\pi)^2 U_{\nabla\pi}^2}.
    \end{align}
    \label{thm: submodular}
\end{restatable}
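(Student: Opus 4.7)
The plan is to invoke the framework of Elenberg et al.~\citep{elenberg2018restricted}, which shows that any set function of the form $F(\mathcal{S}) = \mathrm{const} - \min_{\vw\in\mathbb{R}^{|\mathcal{S}|}} f(\vw;\mathcal{S})$, where $f$ is an $\ell_2$-regularised quadratic supported on a finite ``gradient dictionary'', is $\delta$-weakly submodular with $\delta \geq m/L$, where $m$ and $L$ are the restricted strong-concavity and restricted-smoothness moduli of $-f$ on supports of size at most $N$. Our $F_\lambda(\mathcal{S}) = L_{\max} - \min_\vw \operatorname{Err}_\lambda(\vw,\mathcal{S},\mathcal{L},\theta)$ fits this template once we read $\operatorname{Err}_\lambda$ as $\|G_\mathcal{S}\vw - \nabla_\theta\mathcal{L}\|_2^2 + \lambda\|\vw\|_2^2$, with $G_\mathcal{S}$ collecting the per-sample gradients $\nabla_\theta\mathcal{L}^i$ as columns; it therefore suffices to identify $m$ and $L$ separately for the critic and actor losses.

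For the critic term $F_\lambda^Q$, the ridge regulariser immediately gives $m \geq \lambda$, while restricted smoothness reduces to an eigenvalue bound on $G_\mathcal{S}^\top G_\mathcal{S}+\lambda I$. I would differentiate the squared TD loss, $\nabla_\theta\mathcal{L}_Q^i = 2(Q_\theta(s_i,a_i)-y_i)\nabla_\theta Q_\theta(s_i,a_i)$, and apply the stated bounds to obtain $\|\nabla_\theta\mathcal{L}_Q^i\|_2 \leq 2U_{\mathtt{TD}}U_{\nabla Q}$. Since $\lambda_{\max}(G_\mathcal{S}^\top G_\mathcal{S}) \leq \mathrm{tr}(G_\mathcal{S}^\top G_\mathcal{S}) \leq |\mathcal{S}|\max_i\|\nabla_\theta\mathcal{L}_Q^i\|_2^2$, this gives $L \leq \lambda + 4N(U_{\mathtt{TD}}U_{\nabla Q})^2$ and hence the claimed ratio. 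For the actor term $F_\lambda^\pi$, the identical argument applies: factoring $\nabla_\phi\pi_\phi(s_i)$ out of $\nabla_\phi\mathcal{L}_\pi^i$ via the chain rule and combining the deterministic-policy-gradient contribution (scaled by $1/\alpha$) with the behaviour-cloning contribution yields a per-sample gradient bound of the form $U_{\nabla\pi}(U_{\nabla a}/\alpha+2U_aU_\pi)$, which substituted into the same trace inequality produces the second statement.

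The main technical obstacle is verifying the structural prerequisites of the Elenberg et al.~framework on our specific objective: $F_\lambda$ must be monotone non-decreasing, which follows because enlarging $\mathcal{S}$ can only shrink the minimum of $\operatorname{Err}_\lambda$ (one can always pad with $w_i=0$), and non-negative, which is arranged by choosing $L_{\max}\geq \|\nabla_\theta\mathcal{L}\|_2^2 = \operatorname{Err}_\lambda(\vzero,\emptyset,\mathcal{L},\theta)$. A secondary subtlety is that Eq.~\ref{eq: omp error} is written with the un-squared norm; the natural reconciliation is to work throughout with $\operatorname{Err}^2$, since the OMP analysis is invariant under this reparametrisation and the resulting smoothness and strong-concavity constants coincide with those extracted above. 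Beyond these two reconciliations, the proof is a direct bookkeeping exercise in chain-rule plus trace bounds.
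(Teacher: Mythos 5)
Your proposal follows essentially the same route as the paper's own proof: both reduce the claim to Elenberg et al.'s restricted strong-concavity/smoothness framework, take $m\geq\lambda$ from the ridge term, bound the per-sample critic and actor gradients via the chain rule by $2U_{\mathtt{TD}}U_{\nabla Q}$ and $(U_{\nabla a}/\alpha+2U_aU_\pi)U_{\nabla\pi}$ respectively, and control the smoothness modulus by the trace of the Gram matrix of at most $N$ selected gradients. Your explicit reconciliation of the unsquared norm in Eq.~\ref{eq: omp error} (working with $\operatorname{Err}^2$ so the objective is genuinely quadratic) addresses a point the paper's proof passes over silently, but it does not change the argument.
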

Please refer to Appendix~\ref{appendix: submodular} for detailed proof. 


Based on the above theoretical analysis, let $\operatorname{Err}_{\lambda}\left(\vw, \mathcal{S}_{j-1}, \mathcal L, \theta\right)$ represent the residual error at iteration $j$.
Then, we adopt the Orthogonal Matching Pursuit~(OMP) algorithm~\citep{elenberg2018restricted}, which selects a new data sample $i$ and takes its gradient $\nabla_{\theta}\mathcal L^i\left(\theta\right)$ as the new basis vector to minimize this residual error.
In this way, we update the residual to $\operatorname{Err}_{\lambda}\left(\vw, \mathcal{S}_{j}, \mathcal L, \theta\right)$, where $\mathcal{S}_j = \mathcal{S}_{j-1} \cup \{i\}$.
However, the dynamic nature of offline RL poses a challenge when using OMP, leading to unstable learning.
To address the unique challenges of offline RL, we propose the following novel techniques to enhance gradient matching:


\textbf{(I) Stabilizing Learning with Changing Targets.} 
In supervised learning, the stability of training targets leads to stable gradients. However, in offline RL, target values evolve with policy updates, resulting in unstable gradients in Eq.~\ref{eq: omp error} that affect the quality of the selected data subset. To address this issue, we will stabilize the learning process by using \textbf{empirical returns from trajectories} to smooth the gradient updates. This provides a more consistent learning signal and mitigates instability caused by changing target values.
Specifically, rather than adopt the gradient of the TD loss, we calculate gradient $\nabla_{\theta}\mathcal L\left(\theta\right)$ from the following equation
\begin{align}
\label{eq: q-target value}
    \nabla_{\theta}\mathcal L\left(\theta\right) = \nabla_{\theta}\mathbb{E}_\mathcal{D}[(y - Q_{\theta}(s_t,a_t))^2], \quad
    y = \sum_{k=0}^{H-t} \gamma^k r(s_{t+k}, a_{t+k}).
\end{align}


Furthermore, we will adopt a \textbf{multi-round selection strategy} where data selection occurs over multiple rounds \( T \). In each round, a portion of the data is selected based on the updated Q-values, reducing variance and ensuring that the subset captures the most critical information. This multi-round approach allows for dynamic adjustment of the selected subset as learning progresses, improving stability and reducing the risk of overfitting to specific trajectories.
Specifically, we calculate $\nabla_{\theta_t}\mathcal L\left(\theta_t\right)$ at each round based on Eq.~\ref{eq: q-target value}, where $\theta_t$ is the parameter updated in the $t$-round.
In practice, we pre-store parameters $\theta_t$ with various rounds $t$ and load them during training.

\textbf{(II) Trajectory-based Selection.} 
In offline RL, collected data is often stored in trajectories, which are coherent and more valuable than individual data points.
For this reason, we modify OMP to the trajectory-based gradient matching.
Specifically, we select a new trajectory $i$ of length $K$ and take the mean of gradients $\nabla_{\theta_t}\mathcal L^i_{\text{Traj}}\left(\theta_t\right)=\sum_{k=1}^{K}\nabla_{\theta_t}\mathcal L^k\left(\theta_t\right)/K$  as the new basis vector to minimize the residual error.
Then, we update the residual to $\operatorname{Err}_{\lambda}\left(\vw, \mathcal{S}_{j}, \mathcal L, \theta\right)$, where $\mathcal{S}_j = \mathcal{S}_{j-1} \cup \{\text{Trajectory}_i\}$.

\subsection{Balancing Data Quantity with Performance}\label{sec:method:outer}
In offline RL, while additional data can help generalization, suboptimal data may lead to significant performance degradation due to distribution shifts. To address this, we will introduce a \textbf{constraint term} that biases the TD-gradient matching method toward selecting data with higher return estimates.
Then, based on the design in the Sec.~\ref{sec: offline omp}~(I), the Equation~\ref{eq: q-target value} is transformed into
\begin{equation}
\begin{aligned}
\label{eq: q-target value constraint}
    \nabla_{\theta}\mathcal L\left(\theta\right) &= \nabla_{\theta}\mathbb{E}_\mathcal{D}[(y - Q_{\theta}(s_t,a_t))^2], \quad
    y = \sum_{k=0}^{H-t} \gamma^k r(s_{t+k}, a_{t+k}),\\
    & \text{s.t.} \quad y > \text{\rm Top}~m\%(\{\text{Return}(\text{Trajectory}_j)\}_{j=1}^{|\mathcal{D}|}).
\end{aligned}
\end{equation}

This regularized constraint selection approach ensures that the selected subset not only reduces computational costs but also focuses on data points that are aligned with the learned policy, avoiding performance degradation caused by suboptimal trajectories.

\begin{algorithm}[t]
    \caption{Reduce Dataset for Offline RL~(\name)}
    \label{alg: offline data selection}
    \begin{algorithmic}[1]
        \STATE {\bf Require}: Complete offline dataset $\mathcal{D}$
        \STATE Initialize parameters of the offline agent for data selection $Q_{\theta}, \pi_{\phi}$
        \FOR{$t=1, \cdots, T$}
        \STATE Load parameter $\theta_t$ for $Q_{\theta_t}$
        \STATE Calculate $\nabla_{\theta_t}\mathcal{L}(\theta_t), \nabla_{\theta_t}\mathcal L_{\text{Traj}}(\theta_t)$ based on Equation~\ref{eq: q-target value constraint}
        \STATE $\mathcal{S}_{t}, \vw_{t}$ = OMP$(\nabla_{\theta_t}\mathcal{L}(\theta_t), \nabla_{\theta_t}\mathcal L_{\text{Traj}}(\theta_t), \theta_t)$
        \ENDFOR
    \STATE Reduced offline dataset $\mathcal S\leftarrow\cup_{t\in[T]}\mathcal S_t$
    \STATE Initialize parameters of the offline agent for training on the reduced offline dataset $Q_{\vartheta},\pi_{\varphi}$
    \STATE Train $Q_{\vartheta},\pi_{\varphi}$ based on $\mathcal{S}$ and $\vw$
    \end{algorithmic}
\end{algorithm}

\begin{algorithm}[t]
    \caption{OMP algorithm}
    \label{alg: omp}
    \begin{algorithmic}[1]
        \STATE {\bf Require}: $\nabla_{\theta_t}\mathcal{L}(\theta_t), \nabla_{\theta_t}\mathcal L_{\text{Traj}}(\theta_t), \theta_t$, regularization coefficient $\lambda$
        \STATE $r\leftarrow\operatorname{Err}_{\lambda}\left(\vw_t, \mathcal{S}_t, \mathcal L, \theta_t\right)$
        \WHILE{$r{\geq}\epsilon$}
        \STATE $e=\arg\max_{i\notin\mathcal{S}_t}|\langle\nabla_{\theta_t}\mathcal L_{\text{Traj}}^i(\theta_t), r\rangle|$
        \STATE $\mathcal{S}_t\leftarrow\mathcal{S}_t\cup\{\text{Trajectory}_e\}$
        \STATE $\vw_t\leftarrow\arg\min_{\vw_t}\operatorname{Err}_{\lambda}\left(\vw_t, \mathcal{S}_t, \mathcal L, \theta_t\right)$
        \STATE $r\leftarrow\operatorname{Err}_{\lambda}\left(\vw_t, \mathcal{S}_t, \mathcal L, \theta_t\right)$
        \ENDWHILE\\
    \STATE \textbf{Return} $\mathcal{S}_t$ and $\vw_t$
    \end{algorithmic}
\end{algorithm}

\section{Theoretical Analysis}\label{sec:theory}

In this section, we study the convergence property of our method and the error bounds of the solutions it finds. We work with mild assumptions that the gradient of the TD loss is Lipschitz smooth with constant $L$: $\|\nabla \mathcal L(\theta') - \nabla \mathcal L(\theta)\| \leq L\|\theta' - \theta\|$, and that the gradient is bounded by $\sigma$: $\| \nabla \mathcal L(\theta) \| \leq \sigma$.

Firstly, we show that the TD loss of the offline Q function $Q^{\pi_\mathcal{S}}$ trained on the reduced dataset $\mathcal{S}$ can converge.
\begin{restatable}{theorem}{convergence}\label{thm:convergence}
    \label{thm:convergence}
    Let $\theta^*$ denote the optimal $Q^{\pi_\mathcal{S}}$ parameters, $\theta_t$ the parameters after $t$ training steps. We have
    \begin{align}
        \min_{t=1:G}\mathcal{L}(\theta_t)\leq \mathcal{L}(\theta^*) + \frac{D\sigma}{\sqrt{G}} + \frac{D}{G}\sum_{t=1}^{G-1}\varepsilon.
    \end{align}
    Here 
    $\mathcal{L}(\theta)=\sum_{i \in \mathcal{D}}\mathcal L_{\mathtt{TD}}(s_{i}, a_{i}, r_i, s'_{i}, \theta)$ is the TD loss, $G$ is the number of total training steps, $D=\|\theta^*-\theta_t\|$, and $\varepsilon=\operatorname{Err}\left(\vw, \mathcal{S}, \mathcal L, \theta_t\right)$ is the gradient approximation errors.
\end{restatable}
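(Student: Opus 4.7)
The plan is to treat the training on the reduced dataset as inexact (sub)gradient descent on the full TD loss, where the error at step $t$ is exactly $\varepsilon_t = \operatorname{Err}(\vw,\mathcal{S},\mathcal{L},\theta_t)$, and then follow the classical convergence argument for projected subgradient descent with noise. Concretely, the update is $\theta_{t+1} = \theta_t - \eta\, g_t$ with $g_t = \sum_{i\in\mathcal{S}} w_i \nabla_\theta \mathcal{L}^i(\theta_t)$, so by definition $\|g_t - \nabla\mathcal{L}(\theta_t)\| \leq \varepsilon$. I will use the assumption that $\|\nabla\mathcal{L}(\theta)\|\leq\sigma$ (which, up to a constant, gives $\|g_t\|\leq\sigma$) and convexity of $\mathcal{L}$ in $\theta$ (inherited from the TD loss being a quadratic in the critic's parameters for the fixed-target surrogate used in Eq.~\ref{eq: q-target value}).

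The core algebraic step is the standard one-step distance-to-optimum expansion:
\begin{align*}
\|\theta_{t+1}-\theta^*\|^2
&= \|\theta_t - \theta^*\|^2 - 2\eta\langle g_t, \theta_t-\theta^*\rangle + \eta^2\|g_t\|^2.
\end{align*}
Splitting $\langle g_t,\theta_t-\theta^*\rangle = \langle \nabla\mathcal{L}(\theta_t),\theta_t-\theta^*\rangle + \langle g_t-\nabla\mathcal{L}(\theta_t),\theta_t-\theta^*\rangle$, applying convexity to the first inner product to get a lower bound by $\mathcal{L}(\theta_t)-\mathcal{L}(\theta^*)$, and Cauchy--Schwarz to the second to get an upper bound of $\varepsilon D$, yields
\begin{align*}
\mathcal{L}(\theta_t) - \mathcal{L}(\theta^*)
\leq \frac{1}{2\eta}\bigl(\|\theta_t-\theta^*\|^2 - \|\theta_{t+1}-\theta^*\|^2\bigr) + \frac{\eta\sigma^2}{2} + \varepsilon D.
\end{align*}
Next I would telescope this inequality from $t=1$ to $G$, divide by $G$, and use $\min_t \mathcal{L}(\theta_t) \leq \tfrac{1}{G}\sum_t \mathcal{L}(\theta_t)$. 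The telescoping term is bounded by $D^2/(2\eta G)$ with $D=\|\theta_1-\theta^*\|$ (and I would absorb the running iterates into $D$ using the standard boundedness convention the statement implicitly adopts). Choosing the step size $\eta = D/(\sigma\sqrt{G})$ balances the two deterministic terms into $D\sigma/\sqrt{G}$, while the error terms assemble into $\frac{D}{G}\sum_{t=1}^{G-1}\varepsilon$, matching the statement exactly.

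The main obstacle I anticipate is justifying the convexity (or at least a suitable quasi-convex/Polyak-\L{}ojasiewicz surrogate) of $\mathcal{L}(\theta)$ — in general the bootstrapped TD loss is not convex, but the theorem is stated for the modified target used in this paper (empirical returns along trajectories, Eq.~\ref{eq: q-target value}), which turns the critic objective into an ordinary least-squares problem, hence convex in $\theta$ for linear critics and convex along the gradient direction for smooth nonlinear critics in the standard sense needed for this style of bound. A secondary subtlety is that $D$ in the statement is written as $\|\theta^*-\theta_t\|$ rather than $\|\theta^*-\theta_1\|$; I would treat $D$ as a uniform upper bound on the iterates' distance to the optimum (a standard boundedness assumption) so that all the one-step terms can be bounded by the same $D$, and the final minimum-over-$t$ inequality then produces the stated right-hand side verbatim.
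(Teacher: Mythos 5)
Your proposal is correct and follows essentially the same route as the paper's proof: the standard inexact-subgradient one-step expansion, splitting the inner product into a convexity term and a Cauchy--Schwarz-bounded error term $\varepsilon D$, telescoping, using $\min_t \leq$ average, and balancing with $\eta = D/(\sigma\sqrt{G})$. You also correctly flag the two loose points the paper glosses over (convexity of the TD loss and the uniform-$D$ convention), handling them the same way the paper implicitly does.
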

\begin{proof}
    Please refer to Appendix~\ref{appendix: convergence} for detailed proof. 
\end{proof}

We assume the gradients of selected data are diverse and they can be divided into $K$ clusters $\{\mathcal{C}_1,\cdots,\mathcal{C}_K\}$ with the cluster centers set $\mathcal C=\{c_1,\cdots,c_K\}$.
Then, we prove the residual error $\operatorname{Err}\left(\vw, \mathcal{S}, \mathcal L, \theta\right)$ can be upper bounded:

\begin{restatable}{theorem}{cluster}\label{thm:cluster}
The residual error $\operatorname{Err}\left(\vw, \mathcal{S}, \mathcal L, \theta\right)$ is upper bounded according to the sample's gradient of TD loss:
\begin{align}
    \min_{\mathcal C}\sum_{i\in\mathcal D} \min_{c\in \mathcal C}\|\nabla_{\theta} \mathcal L^i\left(\theta\right) - \nabla_{\theta} \mathcal L^c\left(\theta\right) \|_2. 
\end{align}
\end{restatable}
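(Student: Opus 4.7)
The plan is to exhibit a specific choice of $\mathcal{S}$ and $\vw$ whose residual error matches the right-hand side, and then appeal to the minimality of $\operatorname{Err}(\vw,\mathcal S,\mathcal L,\theta)$ in the definition of $F_\lambda$. Concretely, I would fix any clustering $\{\mathcal C_1,\dots,\mathcal C_K\}$ of $\mathcal D$ with centers $\mathcal C=\{c_1,\dots,c_K\}\subseteq\mathcal D$, set $\mathcal S=\mathcal C$, and assign weights $w_{c_k}=|\mathcal C_k|$, so that each center ``pays for'' all the samples in its cluster.

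With this choice, I would start from the identity $\nabla_\theta\mathcal L(\theta)=\sum_{i\in\mathcal D}\nabla_\theta\mathcal L^i(\theta)$ and rewrite the full-dataset gradient by grouping samples by cluster:
\begin{align*}
\sum_{k=1}^{K} w_{c_k}\nabla_\theta\mathcal L^{c_k}(\theta)-\nabla_\theta\mathcal L(\theta)
=\sum_{k=1}^{K}\sum_{i\in\mathcal C_k}\bigl(\nabla_\theta\mathcal L^{c_k}(\theta)-\nabla_\theta\mathcal L^i(\theta)\bigr).
\end{align*}
The triangle inequality then gives
\begin{align*}
\operatorname{Err}(\vw,\mathcal S,\mathcal L,\theta)
\leq\sum_{k=1}^{K}\sum_{i\in\mathcal C_k}\bigl\|\nabla_\theta\mathcal L^{c_k}(\theta)-\nabla_\theta\mathcal L^i(\theta)\bigr\|_2
=\sum_{i\in\mathcal D}\bigl\|\nabla_\theta\mathcal L^i(\theta)-\nabla_\theta\mathcal L^{c(i)}(\theta)\bigr\|_2,
\end{align*}
where $c(i)$ denotes the center of the cluster containing $i$.

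Next, I would note that for a fixed center set $\mathcal C$, sending each $i$ to its nearest center only decreases the right-hand side, so the bound tightens to $\sum_{i\in\mathcal D}\min_{c\in\mathcal C}\|\nabla_\theta\mathcal L^i(\theta)-\nabla_\theta\mathcal L^c(\theta)\|_2$. Finally, since this inequality holds for every admissible center set $\mathcal C$, I would take the minimum over $\mathcal C$ on the right and the infimum over $(\vw,\mathcal S)$ on the left to obtain the stated bound.

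I do not expect a serious obstacle here; the argument is essentially the classical $k$-medoids-style coreset bound. The only subtlety to handle carefully is bookkeeping: making sure $\mathcal S=\mathcal C\subseteq\mathcal D$ is a valid feasible point for the outer problem, that integer weights $w_{c_k}=|\mathcal C_k|$ are admissible (or, if weights are constrained to a simplex, normalizing by $|\mathcal D|$ on both sides and tracking the factor through the identity for $\nabla_\theta\mathcal L$), and that the minimum over $\mathcal C$ in the statement is taken over clusterings whose centers lie in $\mathcal D$ so that the bound is realized by an actual feasible subset.
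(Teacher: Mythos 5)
Your proposal is correct, and it reaches the same clustering-type bound as the paper, but by a cleaner and more direct route. The paper instead plugs in the \emph{uniform} weights $w_i=|\mathcal D|/|\mathcal S|$, applies the triangle inequality (labelled Jensen's inequality) to bound the residual by $\sum_{i\in\mathcal D}\|\nabla_\theta\mathcal L^i(\theta)-\tfrac{1}{|\mathcal S|}\sum_{s\in\mathcal S}\nabla_\theta\mathcal L^s(\theta)\|_2$, and then invokes the monotonicity of the submodular objective to replace the subset mean by a single cluster center $c_k$, before asserting equality with $\sum_{i\in\mathcal D}\min_{c\in\mathcal C}\|\nabla_\theta\mathcal L^i(\theta)-\nabla_\theta\mathcal L^c(\theta)\|_2$. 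That last passage is loose: it conflates one fixed center $c_k$ with the per-sample nearest-center assignment, and the monotonicity appeal only controls the \emph{optimally reweighted} residual, not the uniform-weight expression it is applied to. Your choice of $\mathcal S=\mathcal C$ with weights $w_{c_k}=|\mathcal C_k|$ sidesteps both issues: the cluster-wise decomposition plus triangle inequality immediately yields the per-sample distance to the assigned center, and tightening to the nearest center and minimizing over $\mathcal C$ is then legitimate. The one caveat you already flag correctly is the only real bookkeeping point: the bound is established for a specific feasible $(\vw,\mathcal S)$, so it transfers to the algorithm's output only via the minimality of $\operatorname{Err}$ over weights (and the subsequent approximation guarantee in the corollary) — a looseness the paper shares.
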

\begin{proof}
Please refer to Appendix~\ref{appendix: cluster theory} for detailed proof.
\end{proof}

We then prove that the reduced dataset selected by our method can achieve a good approximation for the gradient calculated on the complete dataset, which also means $\varepsilon=\operatorname{Err}\left(\vw, \mathcal{S}, \mathcal L, \theta_t\right)$ in Theorem~\ref{thm:convergence} is bounded.

\begin{corollary}[Approximation Error Bound of the Reduced Dataset]\label{thm:c_bound}
    The expected gradient approximation error achieved by our method is at most $5(\ln K+2)$ times the error of the optimal solution $\mathcal{S}^*$:
    \begin{align}
        \operatorname{Err}\left(\vw, \mathcal{S}, \mathcal L, \theta\right) \le 5(\ln K+2)\operatorname{Err}\left(\vw, \mathcal{S}^*, \mathcal L, \theta\right).
    \end{align}
\end{corollary}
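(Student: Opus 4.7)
The strategy is to combine Theorem~\ref{thm:cluster} with a standard greedy-approximation guarantee for weakly submodular cover, both of which the paper has set up in the preceding sections. The corollary then becomes a bookkeeping exercise that feeds the weak-submodularity parameter from Theorem~\ref{thm: submodular} into a classical Wolsey-type approximation bound.

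First I would observe that Theorem~\ref{thm:cluster} effectively reduces the gradient-approximation residual $\operatorname{Err}(\vw,\mathcal{S},\mathcal L,\theta)$ to a $K$-medoid / facility-location clustering cost on the per-sample gradients $\{\nabla_\theta\mathcal L^i(\theta)\}_{i\in\mathcal D}$, a quantity captured (up to sign) by a monotone submodular function $g(\mathcal{S})$. Applying this bound both to the OMP output $\mathcal{S}$ and to the optimizer $\mathcal{S}^*$ reduces the corollary to bounding $g(\mathcal{S})$ by a $5(\ln K + 2)$ multiple of $g(\mathcal{S}^*)$.

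Next I would identify Algorithm~\ref{alg: omp} as the greedy procedure for maximizing $F_\lambda(\mathcal{S}) = L_{\max} - \min_{\vw}\operatorname{Err}_\lambda(\vw,\mathcal{S},\mathcal L,\theta)$: at each iteration OMP picks the trajectory whose gradient is most aligned with the current residual, which is exactly the greedy marginal rule for $F_\lambda$. Theorem~\ref{thm: submodular} certifies that $F_\lambda$ is $\delta$-weakly submodular. Interpreting the stopping rule of Algorithm~\ref{alg: omp} as a submodular-cover instance and invoking the greedy approximation theorem for weakly submodular cover, in the form used by Elenberg et al.\ (2018) and in the CRAIG analysis of Mirzasoleiman et al.\ (2020), gives a $(1 + \ln K)/\delta$-type factor, which together with $H(K)\le\ln K + 1$ and the $\delta$ produced by Theorem~\ref{thm: submodular} yields the stated constant $5(\ln K + 2)$.

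The hard part will be the constant-tracking: (i) converting the weakly-submodular maximization bound of the form $(1 - e^{-\delta})$ into its submodular-cover form $(1 + \ln K)/\delta$; (ii) absorbing the slack introduced when Theorem~\ref{thm:cluster} is applied to both $\mathcal{S}$ and $\mathcal{S}^*$, rather than to the exact optima of the underlying clustering problem; and (iii) folding in the harmonic-number inequality $H(K) \le \ln K + 1$. No new analytical ideas are required beyond those already packaged in Theorems~\ref{thm: submodular} and~\ref{thm:cluster}; the precise coefficient $5$ simply drops out of the bookkeeping once it is aligned with the CRAIG proof structure.
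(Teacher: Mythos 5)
Your first step is on track: Theorem~\ref{thm:cluster} does reduce the residual $\operatorname{Err}(\vw,\mathcal{S},\mathcal L,\theta)$ to a $k$-median-type clustering cost over the per-sample gradients, and the paper's own proof starts exactly there. But the second half of your plan takes the wrong branch, and the gap is concrete: the constant $5(\ln K+2)$ cannot be produced by combining the weak-submodularity parameter $\delta$ of Theorem~\ref{thm: submodular} with a Wolsey-type greedy-cover bound. That route yields a factor of the form $(1+\ln(\cdot))/\delta$, and the $\delta$ certified by Theorem~\ref{thm: submodular} is $\lambda/(\lambda+4N(U_\mathtt{TD}U_{\nabla Q})^2)$ (or the analogous actor-side quantity) --- a problem-dependent number that can be arbitrarily small as $N$ grows or $\lambda$ shrinks. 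No amount of ``bookkeeping'' turns that into a universal coefficient of $5$; asserting that the constant ``simply drops out'' is precisely where the argument fails.

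The actual source of $5(\ln K+2)$ is different in kind: it is the expected approximation ratio for $D^2$-sampling ($k$-means++-style seeding) proved as Theorem~4.3 of \citet{makarychev2020improved}. The paper's proof applies Theorem~\ref{thm:cluster} to bound $\operatorname{Err}$ by the clustering objective in which the centers are restricted to lie in the selected subset, and then invokes that seeding guarantee to compare the resulting clustering cost against the optimum, which in turn lower-bounds the cost achievable by $\mathcal{S}^*$. The word ``expected'' in the corollary statement is the giveaway that a randomized-seeding analysis, not a deterministic greedy/submodular-cover analysis, is doing the work. Your proposal as written would at best prove a bound with a different (and much worse, parameter-dependent) constant, so it does not establish the stated inequality.
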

\begin{proof}
The proof is derived by applying Theorem~\ref{thm:cluster} along with Theorem 4.3 from~\citep{makarychev2020improved}, by observing that cluster centers are included in the reduced dataset. 
\end{proof}

\paragraph{Discussion}
{The aforementioned theoretical analysis has the following limitations: 
First, we assume that the gradients are uniformly bounded.
Therefore, if the gradients of the algorithm diverge in practice, it would contradict our assumptions, and the selected data subset would no longer be valuable. 
Current offline RL methods can only ensure that the Q-values do not diverge~\cite{kumar2020conservative, fujimoto2021minimalist}.
Although this can, to some extent, reflect the gradients of the Q-network that have not diverged, there is no rigorous proof that the bounds of the gradients can be guaranteed. 
Second, the above theoretical analysis is based on the classic TD loss.
However, to provide a consistent learning signal and mitigate instability caused by changing target value, the techniques in Section~\ref{sec:method:outer} adopt a fixed target rather than TD loss.}

\section{Experiment}\label{sec: exp}
In this section, we assess the efficacy of our algorithm by addressing the following key questions. 
(1) Can offline RL algorithms achieve stronger performance on the reduced datasets selected by~\name?
(2) How does \name~perform compare to other offline data selection methods? 
(3) What are the factors that contribute to \name's effectiveness?

\begin{figure}[t]
    \centering
    \subfigure{\includegraphics[scale=0.24]{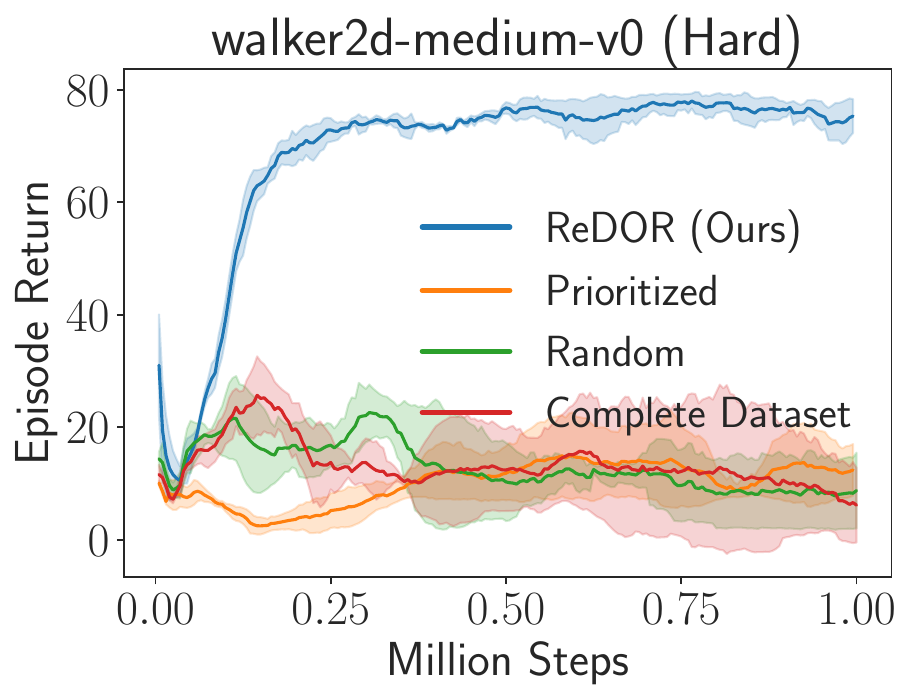}}
    \hspace{0.2cm}
    \subfigure{\includegraphics[scale=0.24]{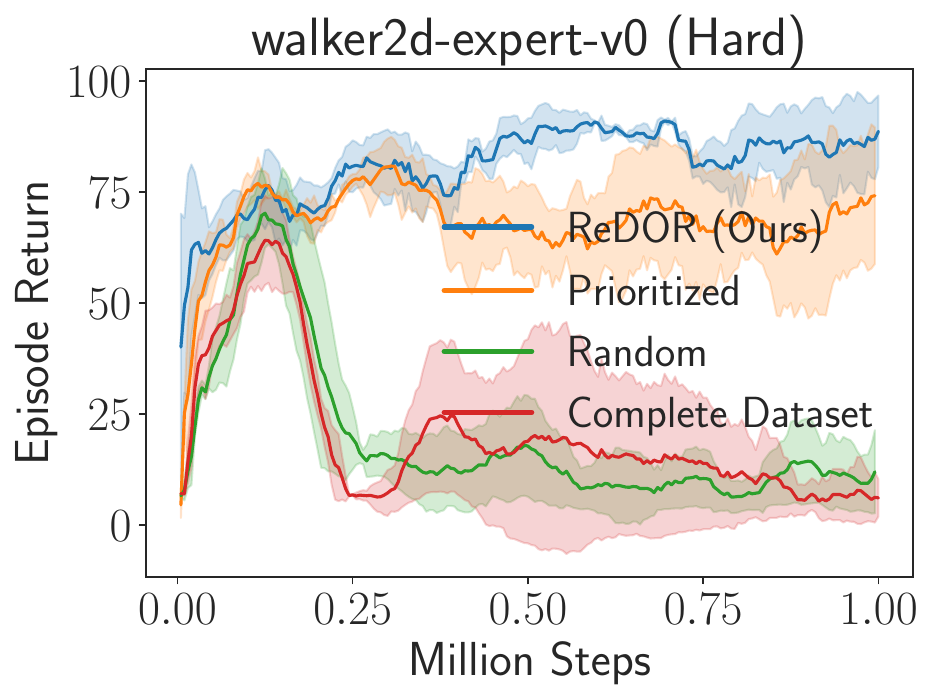}}
    \hspace{0.2cm}
    \subfigure{\includegraphics[scale=0.24]{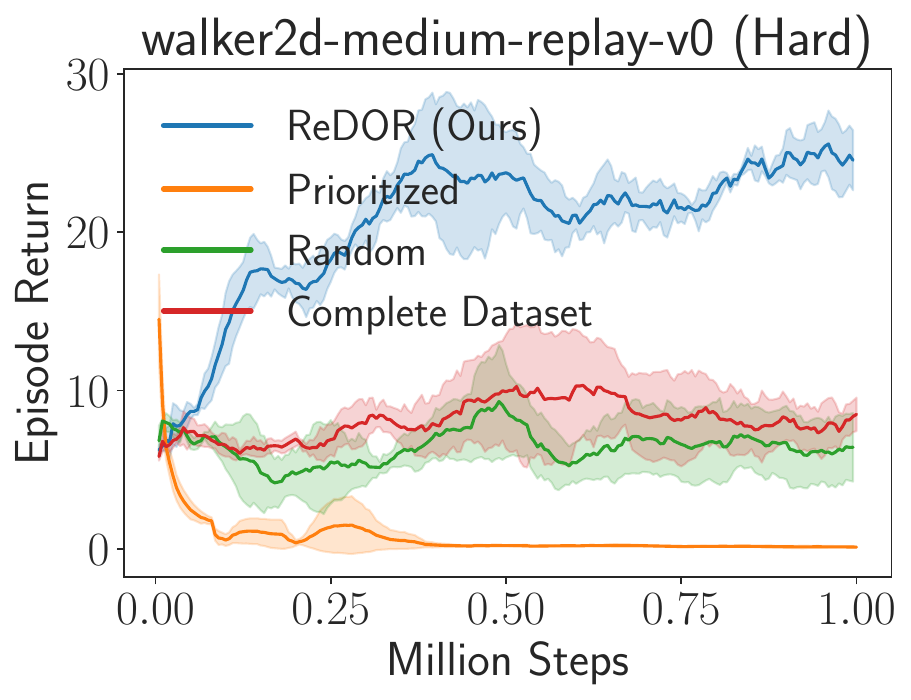}}
    \subfigure{\includegraphics[scale=0.24]{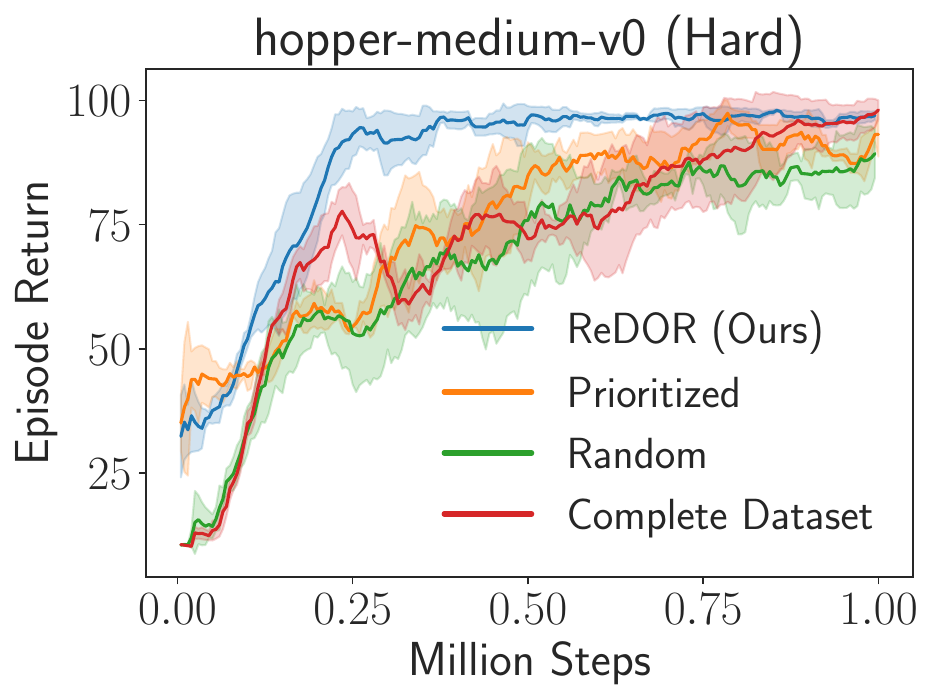}}
    \hspace{0.2cm}
    \subfigure{\includegraphics[scale=0.24]{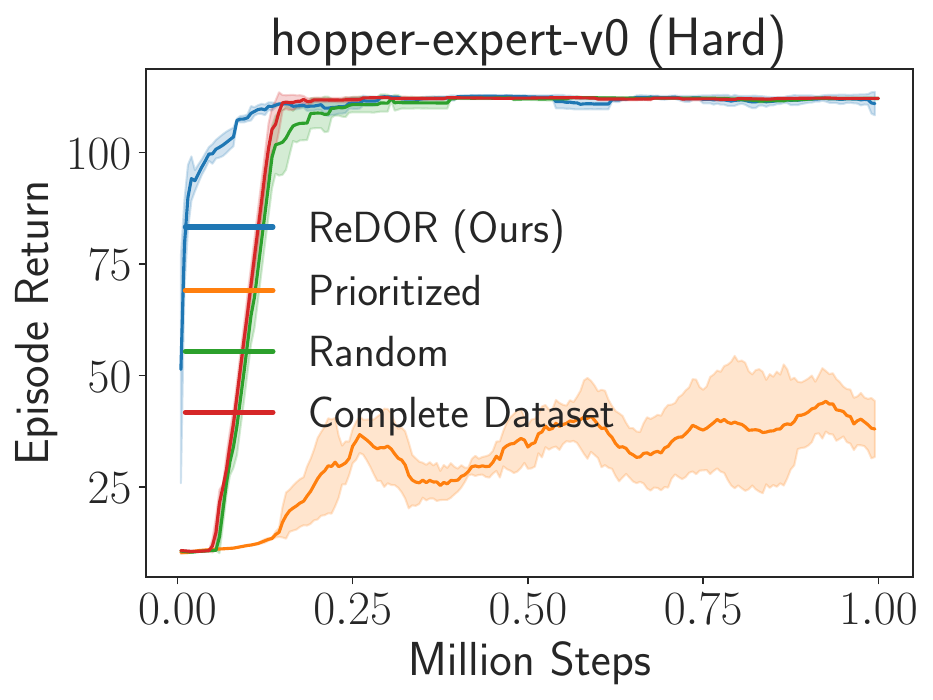}}
    \hspace{0.2cm}
    \subfigure{\includegraphics[scale=0.24]{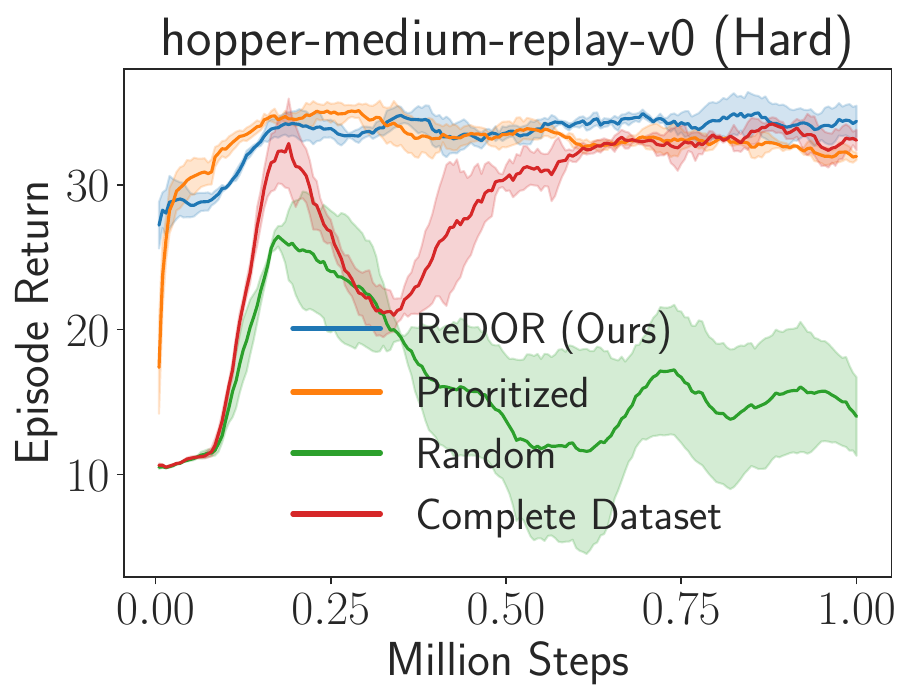}}
    \subfigure{\includegraphics[scale=0.24]{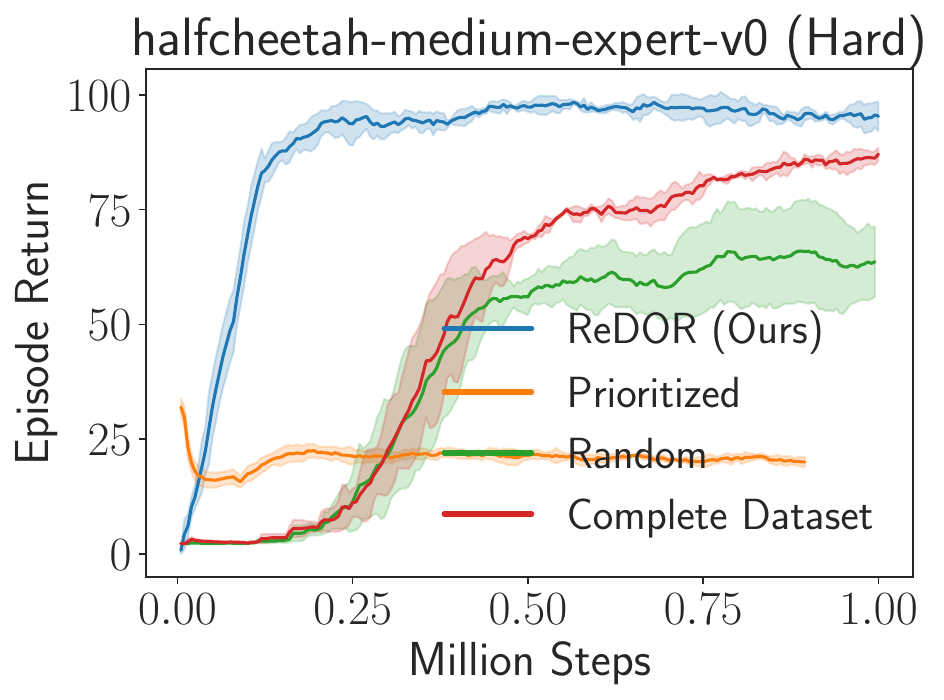}}
    \hspace{0.2cm}
    \subfigure{\includegraphics[scale=0.24]{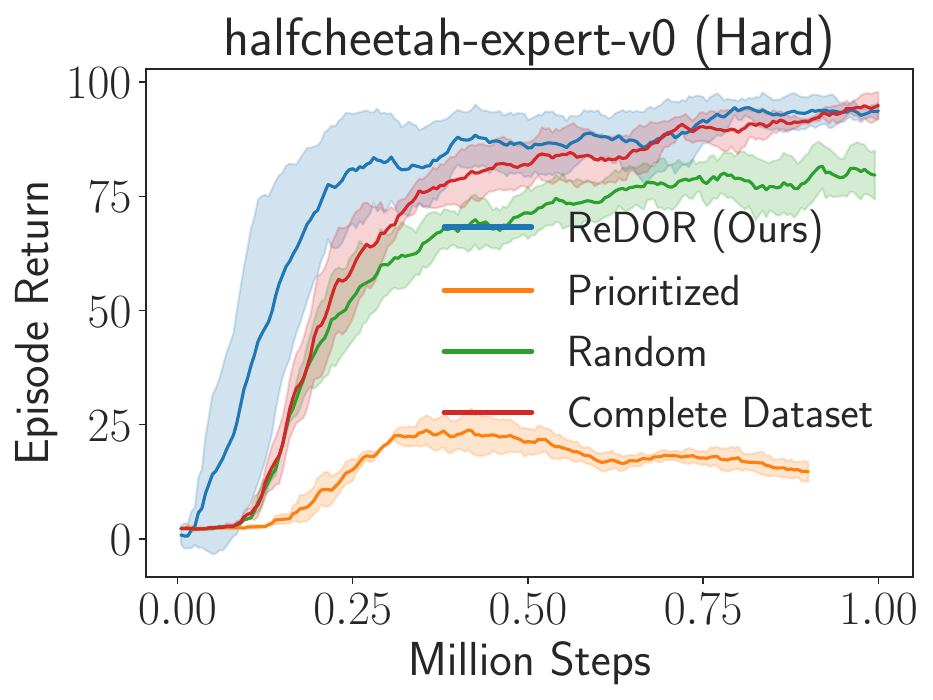}}
    \hspace{0.2cm}
    \subfigure{\includegraphics[scale=0.24]{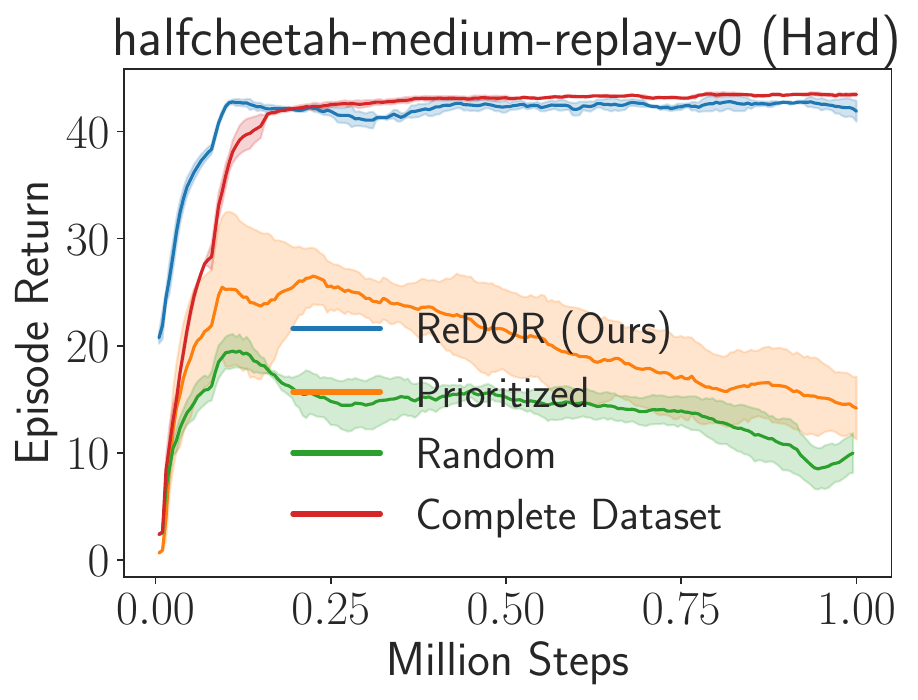}}
    \caption{Experimental results on the D4RL (Hard) offline datasets. All experiment results were averaged over five random seeds. Our method achieves better or
    comparable results than the baselines with lower computational complexity.}
    \label{fig: d4rl hard}
    \vspace{-0.5cm}
\end{figure}



\subsection{Setup}
We evaluate algorithms on the offline RL benchmark D4RL~\citep{fu2020d4rl} to answer the aforementioned questions.
In addition, we consider a more challenging scenario where we add additional low-quality data to the dataset to simulate noise in real-world tasks, named D4RL~(hard).
The evaluation process commences with the selection of offline data, followed by the training of a widely recognized offline RL algorithm, TD3+BC~\citep{fujimoto2021minimalist}, on this reduced dataset for 1 million time steps.
To ensure a fair comparison, we apply the same offline RL algorithm to data subsets obtained by different algorithms. 
Evaluation points are set at every 5,000 training time steps and involve calculating the return of 10 episodes per point.
The results, comprising averages and standard deviations, are computed with five independent random seeds.
On the other hand, we can also incorporate our method into offline model-based approaches, such as MOPO~\citep{yu2020mopo} and MoERL~\citep{kidambi2020morel}.
Similarly, we only need to replace the current offline loss with the corresponding policy and model loss.

\textbf{Baselines}. 
We compare \name~with data selection methods in RL.
Specifically, previous work on prioritized experience replay for online RL~\citep{schaul2015prioritized} aligns closely with our objective. 
We make this a baseline \namep~where samples with the highest TD losses form the reduced dataset. 
Baseline \nameo~presents the performance by training TD3+BC with the original, complete dataset. 
Baseline \namer~randomly selects subsets from the D4RL dataset that are of the same size as \name.
We also compare our method with general dataset reduction techniques from supervised learning.
Specifically, we adopt the coherence criterion from Kernel recursive least squares~($\mathtt{KRLS}$)~\citep{engel2004kernel}, the log det criterion by forward selection in informative vector machines~($\mathtt{LogDet}$)~\citep{seeger2004greedy} and the adapting kernel representation~($\mathtt{BlockGreedy}$)~\citep{schlegel2017adapting} as our baselines.


\subsection{Experimental Results}
\label{sec:exp_perf}


\begin{table*}[t]
    \centering
    \begin{tabular}{c|cccc}
    \toprule
    & KRLS & Log-Det & BlockGreedy & \name \\
    \midrule
    Hopper-medium-v0 & 69.4$\pm$2.5 & 58.4$\pm$3.6 & 83.7$\pm$2.2 & \textbf{94.3$\pm$4.6}\\
    Hopper-expert-v0 & 91.0$\pm$1.1 & 90.7$\pm$1.3 & 98.7$\pm$0.5 & \textbf{110.0$\pm$0.5}\\
    Hopper-medium-replay-v0 & 28.5$\pm$3.2 & 29.4$\pm$1.2 & 30.5$\pm$2.4 & \textbf{35.3$\pm$3.2}\\
    Walker2d-medium-v0 & 49.1$\pm$2.8 & 47.5$\pm$3.4 & 53.3$\pm$3.6 & \textbf{80.5$\pm$2.9}\\
    Walker2d-expert-v0 & 68.4$\pm$3.2 & 67.5$\pm$5.6 & 74.8$\pm$3.4 & \textbf{104.6$\pm$2.5}\\
    Walker2d-medium-replay-v0 & 14.3$\pm$1.2 & 15.2$\pm$2.2 & 16.7$\pm$1.3 & \textbf{21.1$\pm$1.8}\\
    Halfcheetah-medium-v0 & 23.4$\pm$0.5 & 21.9$\pm$0.9 & 27.5$\pm$0.7 & \textbf{41.0$\pm$0.2}\\
    Halfcheetah-expert-v0 & 73.9$\pm$1.4 & 72.1$\pm$2.2 & 79.2$\pm$1.8 & \textbf{88.5$\pm$2.4}\\
    Halfcheetah-medium-replay-v0 & 39.5$\pm$0.3 &39.9$\pm$0.5 & 40.5$\pm$1.0 & \textbf{41.1$\pm$0.4}\\
    \bottomrule
    \end{tabular}
    \caption{Experimental results on the D4RL~(Hard) offline datasets. All experiment results were averaged over five random seeds. Our method performs better than the dataset reduction baselines.}
    \label{tab: varied performance}
\end{table*}

\begin{figure}[t]
    \centering
    \subfigure{\includegraphics[scale=0.20]{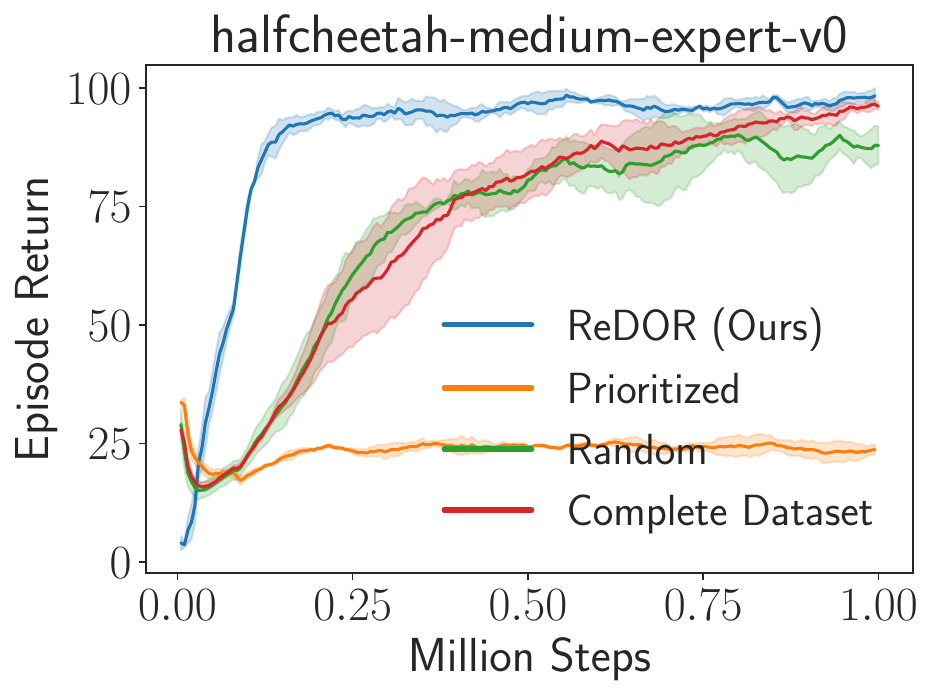}}
    \subfigure{\includegraphics[scale=0.20]{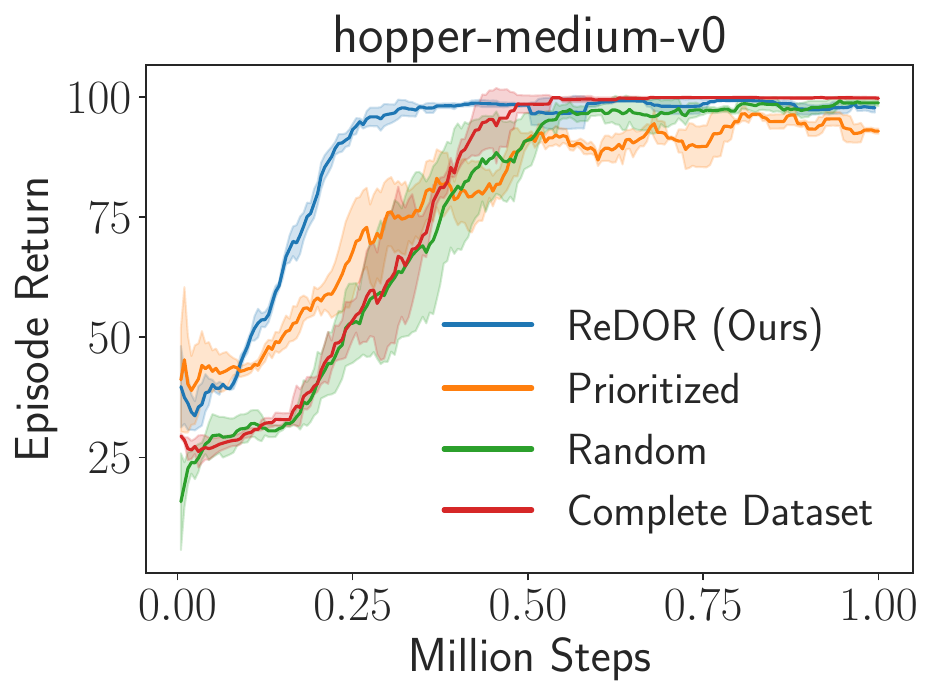}}
    \subfigure{\includegraphics[scale=0.20]{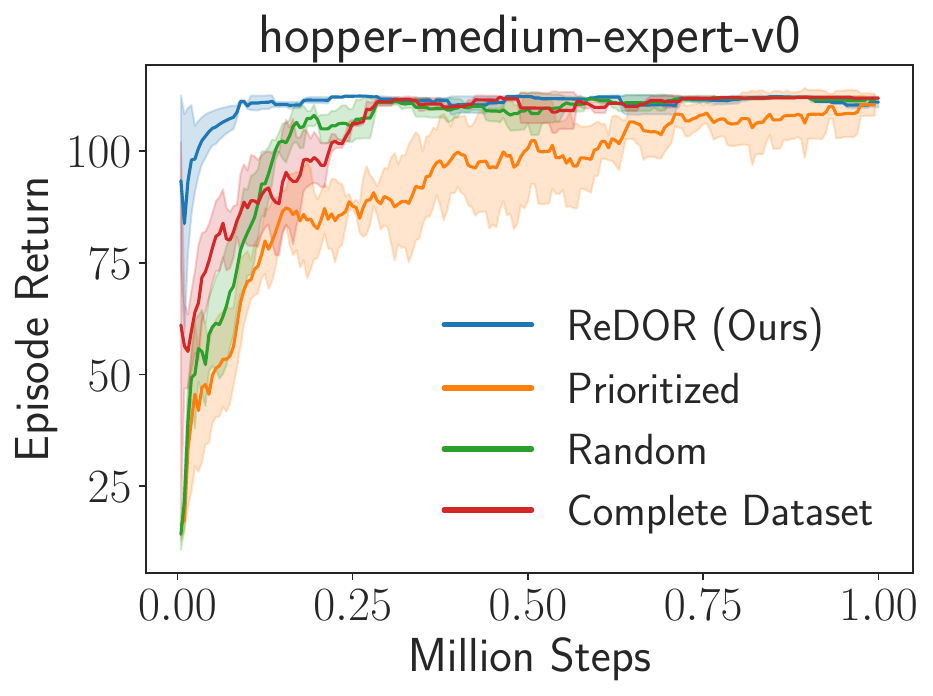}}
    \subfigure{\includegraphics[scale=0.20]{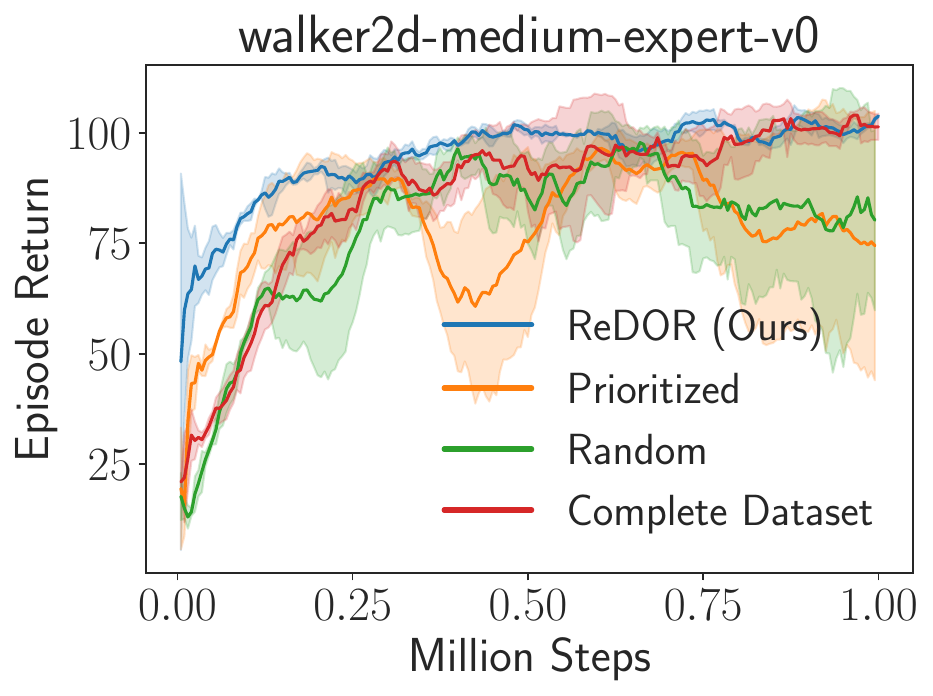}}
    \caption{Experimental results on the D4RL offline datasets. All experiment results were averaged over five random seeds. Our method achieves better or comparable results than the baselines consistently.}
    \label{fig: d4rl original}
\end{figure}

\paragraph{Answer of Question 1:}
To show that \name~can improve offline RL algorithms, we compare \name~with Complete Dataset, Prioritized, and Random in the Mujoco domain.
The experimental results in Figure~\ref{fig: d4rl hard} show that our method achieves superior performance than baselines.
By leveraging the reduced dataset generated from \name, the agent can learn much faster than learning from the complete dataset.
Further, the results in Figure~\ref{fig: d4rl original} show that \name~also performs better than the complete dataset and data selection RL baselines in the standard D4RL datasets. 
This is because prior methods select data in a random or loss-priority manner, which lacks guidance for subset selection and leads to degraded performance for downstream tasks.

In addition, to test \name's generality across various offline RL algorithms on various domains, we also conduct experiments on Antmaze tasks.
We use IQL~\citep{kostrikov2021offline} as the backbone of offline RL algorithms.
The experimental results in Table~\ref{tab: other domain2} show that our method achieves stronger performance than baselines.
In the antmaze tasks, the agent is required to stitch together various trajectories to reach the target location.
In this scenario, randomly removing data could result in the loss of critical data, thereby preventing complete the task.
Differently, \name~extracts valuable subset by balancing data quantity with performance, achieving a stronger performance than the complete dataset.


\paragraph{Answer of Question 2:}
To test whether \name~can select more valuable data than the data selection algorithms in supervised learning, we compare our method with KRLS~\citep{engel2004kernel}, Log-Det~\citep{seeger2004greedy} and BlockGreedy~\citep{schlegel2017adapting} in the D4RL~(Hard) datasets.
The experimental results in Table~\ref{tab: varied performance} show that our method generally outperforms baselines.
We hypothesize that supervised learning is static with fixed learning objectives, while offline RL's dynamic nature makes the target values evolve with policy updates, complicating the data selection process.
Therefore, the data selection methods in supervised learning cannot be directly applied to offline RL scenarios.

\begin{figure}[t]
    \centering
    \includegraphics[width=0.97\linewidth]{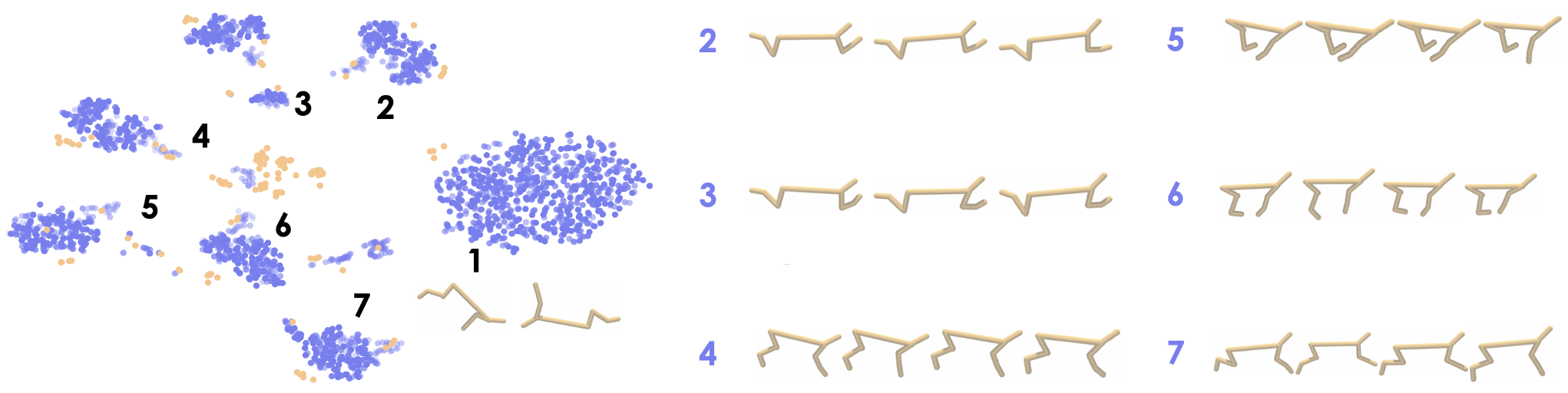}
    \caption{Visualization of the \textcolor{blue}{complete dataset} and the \textcolor{orange}{reduced dataset} in \texttt{halfcheetah} task. The higher opacity of a point represents a large time step towards the end of an episode. The dataset embedding is characterized by its division into different components. 
     Samples selected by \name~connect different components by focusing on the data related to the task.}
    \label{fig: t-sne}
\end{figure}

\begin{table}[t]
    \centering 
    \begin{tabular}{c|cccc}
    \toprule
        Env & Random & Prioritized & Complete Dataset & \name\\
        \midrule
        Antmaze-umaze-v0 & 75.1$\pm$2.5 & 70.2$\pm$3.6 & 87.5$\pm$1.3 & \textbf{90.7$\pm$3.3}\\
        Antmaze-umaze-diverse-v0 & 46.3$\pm$1.9 & 44.7$\pm$2.7 & 62.2$\pm$2.0 & \textbf{76.7$\pm$2.2} \\
        Antmaze-medium-play-v0 & 59.3$\pm$1.6 & 60.3$\pm$2.9 & 71.2$\pm$2.2 & \textbf{80.3$\pm$2.9}\\
        Antmaze-medium-diverse-v0 & 43.6$\pm$2.7 & 46.9$\pm$3.8 & 70.0$\pm$1.6 & \textbf{84.9$\pm$3.8}\\
        Antmaze-large-play-v0 &	3.7$\pm$0.7 & 15.0$\pm$3.5 & 39.6$\pm$3.6 & \textbf{46.0$\pm$3.5}\\
        Antmaze-large-diverse-v0 & 16.0$\pm$3.6 & 20.5$\pm$3.7 & 47.5$\pm$1.1 & \textbf{52.0$\pm$3.7}\\
    \bottomrule
    \end{tabular}
    \caption{Experimental results on the Antmaze offline datasets. All experiment results were averaged over five random seeds. Our method performs better than baselines. }
    \label{tab: other domain2}
\end{table}



\paragraph{Answer of Question 3:}
To study the contribution of each component in our learning framework, we conduct the following ablation study. 
\nameq: We replace the empirical returns used to update Q functions with the standard target Q function in the TD loss function. 
\namei: We set the number of data selection rounds to 1 and study the function of multi-round data selection.
The experimental results in Figure~\ref{fig: modular ablation} in Appendix~\ref{sec: ablation} show that removing any of these two modules will worsen the performance of \name. In case like $\texttt{walker2d-medium}$, ablation \namei~even decrease the performance by over 80\%, and ablation \nameq~results in a 95\% performance drop in $\texttt{walker2d-expert}$. Furthermore, we also find that in the $\texttt{halfcheetah}$ tasks, the impact of removing the two modules is relatively small. This result can be attributable to the fact that this task has a limited state space, and we can directly apply OMP to the entire dataset and identify important and diverse data.

We visualize the selected data by \name~to better understand how it works. 
Figure~\ref{fig: t-sne} displays the t-SNE low-dimensional embeddings, with the complete dataset in blue and the selected data in orange. 
The higher opacity of a point indicates a larger time step. The dataset's structure is revealed by its segmentation into diverse components: 
In \texttt{halfcheetah}, each component reflects a distinct skill of the agent.
For example, from 1 to 7, they represent falling, leg lifting, jumping, landing, leg swapping, stepping, and starting, respectively.
We can observe that the selected samples by \name~ not only cover each component of the dataset but also effectively bridge the gaps between them, enhancing the dataset's versatility and coherence. 
Moreover, we find that \name~is less concerned with the falling data and instead focuses on the data related to the task.
This observation can explain the improved performance of \name. For additional visualizations, please refer to Appendix~\ref{appendix: visual}.

\subsection{Computational complexity}
We report the computational overhead of \name~on various datasets. 
All experiments are conducted on the same computational device (GeForce RTX 3090 GPU). 
The results in Appendix~\ref{appendix: computation complexity} indicate that even on datasets containing millions of data points, the computational overhead of our method remains low~(e.g., several minutes).
This low computational complexity can be attributed to the trajectory-based selection technique in Sec.~\ref{sec: offline omp}~(II) and the regularized constraint technique in Sec.~\ref{sec:method:outer}, making our method easily scalable to large-scale datasets. 




\section{Conclusion}
In this work, we demonstrate a critical problem in offline RL -- identifying the reduced dataset to improve offline algorithm performance with low computational complexity.
We cast the issue as the gradient approximation problem.
By transforming the common actor-critic framework into the submodular objective, we apply the orthogonal matching pursuit method to construct the reduced dataset.
Further, we propose multiple key modifications to stabilize the learning process.
We validate the effectiveness of our proposed data selection method through theoretical analysis and extensive experiments.
For future work, we attempt to apply our method to robot tasks in the real world.

\section*{Acknowledgments}
This work was supported by Strategic Priority Research Program of the Chinese Academy of Sciences under Grant No.XDA27040200, in part by the National Key R\&D Program of China under Grant No.2022ZD0116405.


\bibliography{iclr2025_conference}

\begin{thebibliography}{51}
\providecommand{\natexlab}[1]{#1}
\providecommand{\url}[1]{\texttt{#1}}
\expandafter\ifx\csname urlstyle\endcsname\relax
  \providecommand{\doi}[1]{doi: #1}\else
  \providecommand{\doi}{doi: \begingroup \urlstyle{rm}\Url}\fi

\bibitem[Chen et~al.(2020)Chen, Zhou, Wang, Wang, Wu, and Ross]{chen2020bail}
Xinyue Chen, Zijian Zhou, Zheng Wang, Che Wang, Yanqiu Wu, and Keith Ross.
\newblock Bail: Best-action imitation learning for batch deep reinforcement learning.
\newblock \emph{Advances in Neural Information Processing Systems}, 33:\penalty0 18353--18363, 2020.

\bibitem[Coleman et~al.(2019)Coleman, Yeh, Mussmann, Mirzasoleiman, Bailis, Liang, Leskovec, and Zaharia]{coleman2019selection}
Cody Coleman, Christopher Yeh, Stephen Mussmann, Baharan Mirzasoleiman, Peter Bailis, Percy Liang, Jure Leskovec, and Matei Zaharia.
\newblock Selection via proxy: Efficient data selection for deep learning.
\newblock \emph{arXiv preprint arXiv:1906.11829}, 2019.

\bibitem[Elenberg et~al.(2018)Elenberg, Khanna, Dimakis, and Negahban]{elenberg2018restricted}
Ethan~R Elenberg, Rajiv Khanna, Alexandros~G Dimakis, and Sahand Negahban.
\newblock Restricted strong convexity implies weak submodularity.
\newblock \emph{The Annals of Statistics}, 46\penalty0 (6B):\penalty0 3539--3568, 2018.

\bibitem[Engel et~al.(2004)Engel, Mannor, and Meir]{engel2004kernel}
Yaakov Engel, Shie Mannor, and Ron Meir.
\newblock The kernel recursive least-squares algorithm.
\newblock \emph{IEEE Transactions on signal processing}, 52\penalty0 (8):\penalty0 2275--2285, 2004.

\bibitem[Feldman(2020)]{feldman2020core}
Dan Feldman.
\newblock Core-sets: Updated survey.
\newblock \emph{Sampling techniques for supervised or unsupervised tasks}, pp.\  23--44, 2020.

\bibitem[Fu et~al.(2020)Fu, Kumar, Nachum, Tucker, and Levine]{fu2020d4rl}
Justin Fu, Aviral Kumar, Ofir Nachum, George Tucker, and Sergey Levine.
\newblock D4rl: Datasets for deep data-driven reinforcement learning, 2020.

\bibitem[Fujimoto \& Gu(2021)Fujimoto and Gu]{fujimoto2021minimalist}
Scott Fujimoto and Shixiang~Shane Gu.
\newblock A minimalist approach to offline reinforcement learning.
\newblock \emph{Advances in neural information processing systems}, 34:\penalty0 20132--20145, 2021.

\bibitem[Fujimoto et~al.(2019)Fujimoto, Meger, and Precup]{fujimoto2019off}
Scott Fujimoto, David Meger, and Doina Precup.
\newblock Off-policy deep reinforcement learning without exploration.
\newblock In \emph{International conference on machine learning}, pp.\  2052--2062. PMLR, 2019.

\bibitem[Hong et~al.(2023{\natexlab{a}})Hong, Agrawal, Combes, and Laroche]{hong2023harnessing}
Zhang-Wei Hong, Pulkit Agrawal, R{\'e}mi Tachet~des Combes, and Romain Laroche.
\newblock Harnessing mixed offline reinforcement learning datasets via trajectory weighting.
\newblock \emph{arXiv preprint arXiv:2306.13085}, 2023{\natexlab{a}}.

\bibitem[Hong et~al.(2023{\natexlab{b}})Hong, Kumar, Karnik, Bhandwaldar, Srivastava, Pajarinen, Laroche, Gupta, and Agrawal]{hong2023beyond}
Zhang-Wei Hong, Aviral Kumar, Sathwik Karnik, Abhishek Bhandwaldar, Akash Srivastava, Joni Pajarinen, Romain Laroche, Abhishek Gupta, and Pulkit Agrawal.
\newblock Beyond uniform sampling: Offline reinforcement learning with imbalanced datasets.
\newblock \emph{Advances in Neural Information Processing Systems}, 36:\penalty0 4985--5009, 2023{\natexlab{b}}.

\bibitem[Hu et~al.(2022)Hu, Yang, Zhao, and Zhang]{hu2022role}
Hao Hu, Yiqin Yang, Qianchuan Zhao, and Chongjie Zhang.
\newblock On the role of discount factor in offline reinforcement learning.
\newblock In \emph{International Conference on Machine Learning}, pp.\  9072--9098. PMLR, 2022.

\bibitem[Iyer et~al.(2021)Iyer, Khargoankar, Bilmes, and Asanani]{iyer2021submodular}
Rishabh Iyer, Ninad Khargoankar, Jeff Bilmes, and Himanshu Asanani.
\newblock Submodular combinatorial information measures with applications in machine learning.
\newblock In \emph{Algorithmic Learning Theory}, pp.\  722--754. PMLR, 2021.

\bibitem[Janner et~al.(2019)Janner, Fu, Zhang, and Levine]{janner2019trust}
Michael Janner, Justin Fu, Marvin Zhang, and Sergey Levine.
\newblock When to trust your model: Model-based policy optimization.
\newblock \emph{Advances in neural information processing systems}, 32, 2019.

\bibitem[Janner et~al.(2021)Janner, Li, and Levine]{janner2021offline}
Michael Janner, Qiyang Li, and Sergey Levine.
\newblock Offline reinforcement learning as one big sequence modeling problem.
\newblock \emph{Advances in neural information processing systems}, 34:\penalty0 1273--1286, 2021.

\bibitem[Kaushal et~al.(2019)Kaushal, Iyer, Kothawade, Mahadev, Doctor, and Ramakrishnan]{kaushal2019learning}
Vishal Kaushal, Rishabh Iyer, Suraj Kothawade, Rohan Mahadev, Khoshrav Doctor, and Ganesh Ramakrishnan.
\newblock Learning from less data: A unified data subset selection and active learning framework for computer vision.
\newblock In \emph{2019 IEEE Winter Conference on Applications of Computer Vision (WACV)}, pp.\  1289--1299. IEEE, 2019.

\bibitem[Kidambi et~al.(2020)Kidambi, Rajeswaran, Netrapalli, and Joachims]{kidambi2020morel}
Rahul Kidambi, Aravind Rajeswaran, Praneeth Netrapalli, and Thorsten Joachims.
\newblock Morel: Model-based offline reinforcement learning.
\newblock \emph{Advances in neural information processing systems}, 33:\penalty0 21810--21823, 2020.

\bibitem[Killamsetty et~al.(2021{\natexlab{a}})Killamsetty, Durga, Ramakrishnan, De, and Iyer]{killamsetty2021grad}
Krishnateja Killamsetty, Sivasubramanian Durga, Ganesh Ramakrishnan, Abir De, and Rishabh Iyer.
\newblock Grad-match: Gradient matching based data subset selection for efficient deep model training.
\newblock In \emph{International Conference on Machine Learning}, pp.\  5464--5474. PMLR, 2021{\natexlab{a}}.

\bibitem[Killamsetty et~al.(2021{\natexlab{b}})Killamsetty, Sivasubramanian, Ramakrishnan, and Iyer]{killamsetty2021glister}
Krishnateja Killamsetty, Durga Sivasubramanian, Ganesh Ramakrishnan, and Rishabh Iyer.
\newblock Glister: Generalization based data subset selection for efficient and robust learning.
\newblock In \emph{Proceedings of the AAAI Conference on Artificial Intelligence}, volume~35, pp.\  8110--8118, 2021{\natexlab{b}}.

\bibitem[Killamsetty et~al.(2021{\natexlab{c}})Killamsetty, Zhao, Chen, and Iyer]{killamsetty2021retrieve}
Krishnateja Killamsetty, Xujiang Zhao, Feng Chen, and Rishabh Iyer.
\newblock Retrieve: Coreset selection for efficient and robust semi-supervised learning.
\newblock \emph{Advances in Neural Information Processing Systems}, 34:\penalty0 14488--14501, 2021{\natexlab{c}}.

\bibitem[Kostrikov et~al.(2021)Kostrikov, Nair, and Levine]{kostrikov2021offline}
Ilya Kostrikov, Ashvin Nair, and Sergey Levine.
\newblock Offline reinforcement learning with implicit q-learning.
\newblock \emph{arXiv preprint arXiv:2110.06169}, 2021.

\bibitem[Kumar et~al.(2019)Kumar, Fu, Soh, Tucker, and Levine]{kumar2019stabilizing}
Aviral Kumar, Justin Fu, Matthew Soh, George Tucker, and Sergey Levine.
\newblock Stabilizing off-policy q-learning via bootstrapping error reduction.
\newblock \emph{Advances in Neural Information Processing Systems}, 32, 2019.

\bibitem[Kumar et~al.(2020)Kumar, Zhou, Tucker, and Levine]{kumar2020conservative}
Aviral Kumar, Aurick Zhou, George Tucker, and Sergey Levine.
\newblock Conservative q-learning for offline reinforcement learning.
\newblock \emph{Advances in Neural Information Processing Systems}, 33:\penalty0 1179--1191, 2020.

\bibitem[Levine et~al.(2020)Levine, Kumar, Tucker, and Fu]{levine2020offline}
Sergey Levine, Aviral Kumar, George Tucker, and Justin Fu.
\newblock Offline reinforcement learning: Tutorial, review, and perspectives on open problems.
\newblock \emph{arXiv preprint arXiv:2005.01643}, 2020.

\bibitem[Lu et~al.(2022)Lu, Ball, Rudner, Parker-Holder, Osborne, and Teh]{lu2022challenges}
Cong Lu, Philip~J Ball, Tim~GJ Rudner, Jack Parker-Holder, Michael~A Osborne, and Yee~Whye Teh.
\newblock Challenges and opportunities in offline reinforcement learning from visual observations.
\newblock \emph{arXiv preprint arXiv:2206.04779}, 2022.

\bibitem[Ma et~al.(2021{\natexlab{a}})Ma, Yang, Hu, Liu, Yang, Zhang, Zhao, and Liang]{ma2021offline}
Xiaoteng Ma, Yiqin Yang, Hao Hu, Qihan Liu, Jun Yang, Chongjie Zhang, Qianchuan Zhao, and Bin Liang.
\newblock Offline reinforcement learning with value-based episodic memory.
\newblock \emph{arXiv preprint arXiv:2110.09796}, 2021{\natexlab{a}}.

\bibitem[Ma et~al.(2021{\natexlab{b}})Ma, Jayaraman, and Bastani]{ma2021conservative}
Yecheng Ma, Dinesh Jayaraman, and Osbert Bastani.
\newblock Conservative offline distributional reinforcement learning.
\newblock \emph{Advances in Neural Information Processing Systems}, 34:\penalty0 19235--19247, 2021{\natexlab{b}}.

\bibitem[Makarychev et~al.(2020)Makarychev, Reddy, and Shan]{makarychev2020improved}
Konstantin Makarychev, Aravind Reddy, and Liren Shan.
\newblock Improved guarantees for k-means++ and k-means++ parallel.
\newblock \emph{Advances in Neural Information Processing Systems}, 33:\penalty0 16142--16152, 2020.

\bibitem[Mirzasoleiman et~al.(2020)Mirzasoleiman, Bilmes, and Leskovec]{mirzasoleiman2020coresets}
Baharan Mirzasoleiman, Jeff Bilmes, and Jure Leskovec.
\newblock Coresets for data-efficient training of machine learning models.
\newblock In \emph{International Conference on Machine Learning}, pp.\  6950--6960. PMLR, 2020.

\bibitem[Mnih et~al.(2015)Mnih, Kavukcuoglu, Silver, Rusu, Veness, Bellemare, Graves, Riedmiller, Fidjeland, Ostrovski, et~al.]{mnih2015human}
Volodymyr Mnih, Koray Kavukcuoglu, David Silver, Andrei~A Rusu, Joel Veness, Marc~G Bellemare, Alex Graves, Martin Riedmiller, Andreas~K Fidjeland, Georg Ostrovski, et~al.
\newblock Human-level control through deep reinforcement learning.
\newblock \emph{nature}, 518\penalty0 (7540):\penalty0 529--533, 2015.

\bibitem[Nair et~al.(2020)Nair, Gupta, Dalal, and Levine]{nair2020awac}
Ashvin Nair, Abhishek Gupta, Murtaza Dalal, and Sergey Levine.
\newblock Awac: Accelerating online reinforcement learning with offline datasets.
\newblock \emph{arXiv preprint arXiv:2006.09359}, 2020.

\bibitem[Nambiar et~al.(2023)Nambiar, Ghosh, Ong, Chan, Bee, and Krishnaswamy]{nambiar2023deep}
Mila Nambiar, Supriyo Ghosh, Priscilla Ong, Yu~En Chan, Yong~Mong Bee, and Pavitra Krishnaswamy.
\newblock Deep offline reinforcement learning for real-world treatment optimization applications.
\newblock In \emph{Proceedings of the 29th ACM SIGKDD Conference on Knowledge Discovery and Data Mining}, pp.\  4673--4684, 2023.

\bibitem[Paul et~al.(2021)Paul, Ganguli, and Dziugaite]{paul2021deep}
Mansheej Paul, Surya Ganguli, and Gintare~Karolina Dziugaite.
\newblock Deep learning on a data diet: Finding important examples early in training.
\newblock \emph{Advances in Neural Information Processing Systems}, 34:\penalty0 20596--20607, 2021.

\bibitem[Peng et~al.(2019)Peng, Kumar, Zhang, and Levine]{peng2019advantage}
Xue~Bin Peng, Aviral Kumar, Grace Zhang, and Sergey Levine.
\newblock Advantage-weighted regression: Simple and scalable off-policy reinforcement learning.
\newblock \emph{arXiv preprint arXiv:1910.00177}, 2019.

\bibitem[Schaul et~al.(2015)Schaul, Quan, Antonoglou, and Silver]{schaul2015prioritized}
Tom Schaul, John Quan, Ioannis Antonoglou, and David Silver.
\newblock Prioritized experience replay.
\newblock \emph{arXiv preprint arXiv:1511.05952}, 2015.

\bibitem[Schlegel et~al.(2017)Schlegel, Pan, Chen, and White]{schlegel2017adapting}
Matthew Schlegel, Yangchen Pan, Jiecao Chen, and Martha White.
\newblock Adapting kernel representations online using submodular maximization.
\newblock In \emph{International Conference on Machine Learning}, pp.\  3037--3046. PMLR, 2017.

\bibitem[Schweighofer et~al.(2021)Schweighofer, Hofmarcher, Dinu, Renz, Bitto-Nemling, Patil, and Hochreiter]{schweighofer2021understanding}
Kajetan Schweighofer, Markus Hofmarcher, Marius-Constantin Dinu, Philipp Renz, Angela Bitto-Nemling, Vihang~Prakash Patil, and Sepp Hochreiter.
\newblock Understanding the effects of dataset characteristics on offline reinforcement learning.
\newblock \emph{arXiv preprint arXiv:2111.04714}, 2021.

\bibitem[Seeger(2004)]{seeger2004greedy}
Matthias Seeger.
\newblock Greedy forward selection in the informative vector machine.
\newblock In \emph{Technical report, Technical report}. Citeseer, 2004.

\bibitem[Sharir et~al.(2020)Sharir, Peleg, and Shoham]{sharir2020cost}
Or~Sharir, Barak Peleg, and Yoav Shoham.
\newblock The cost of training nlp models: A concise overview.
\newblock \emph{arXiv preprint arXiv:2004.08900}, 2020.

\bibitem[Sutton \& Barto(2018)Sutton and Barto]{sutton2018reinforcement}
Richard~S Sutton and Andrew~G Barto.
\newblock \emph{Reinforcement learning: An introduction}.
\newblock MIT press, 2018.

\bibitem[Swazinna et~al.(2021)Swazinna, Udluft, and Runkler]{swazinna2021measuring}
Phillip Swazinna, Steffen Udluft, and Thomas Runkler.
\newblock Measuring data quality for dataset selection in offline reinforcement learning.
\newblock In \emph{2021 IEEE Symposium Series on Computational Intelligence (SSCI)}, pp.\  1--8. IEEE, 2021.

\bibitem[Toneva et~al.(2018)Toneva, Sordoni, Combes, Trischler, Bengio, and Gordon]{toneva2018empirical}
Mariya Toneva, Alessandro Sordoni, Remi Tachet~des Combes, Adam Trischler, Yoshua Bengio, and Geoffrey~J Gordon.
\newblock An empirical study of example forgetting during deep neural network learning.
\newblock \emph{arXiv preprint arXiv:1812.05159}, 2018.

\bibitem[Wei et~al.(2015)Wei, Iyer, and Bilmes]{wei2015submodularity}
Kai Wei, Rishabh Iyer, and Jeff Bilmes.
\newblock Submodularity in data subset selection and active learning.
\newblock In \emph{International conference on machine learning}, pp.\  1954--1963. PMLR, 2015.

\bibitem[Wu et~al.(2019)Wu, Tucker, and Nachum]{wu2019behavior}
Yifan Wu, George Tucker, and Ofir Nachum.
\newblock Behavior regularized offline reinforcement learning.
\newblock \emph{arXiv preprint arXiv:1911.11361}, 2019.

\bibitem[Yang et~al.(2021)Yang, Ma, Li, Zheng, Zhang, Huang, Yang, and Zhao]{yang2021believe}
Yiqin Yang, Xiaoteng Ma, Chenghao Li, Zewu Zheng, Qiyuan Zhang, Gao Huang, Jun Yang, and Qianchuan Zhao.
\newblock Believe what you see: Implicit constraint approach for offline multi-agent reinforcement learning.
\newblock \emph{Advances in Neural Information Processing Systems}, 34:\penalty0 10299--10312, 2021.

\bibitem[Yang et~al.(2023)Yang, Hu, Li, Li, Yang, Zhao, and Zhang]{yang2023flow}
Yiqin Yang, Hao Hu, Wenzhe Li, Siyuan Li, Jun Yang, Qianchuan Zhao, and Chongjie Zhang.
\newblock Flow to control: Offline reinforcement learning with lossless primitive discovery.
\newblock In \emph{Proceedings of the AAAI Conference on Artificial Intelligence}, volume~37, pp.\  10843--10851, 2023.

\bibitem[Yarats et~al.(2022)Yarats, Brandfonbrener, Liu, Laskin, Abbeel, Lazaric, and Pinto]{yarats2022don}
Denis Yarats, David Brandfonbrener, Hao Liu, Michael Laskin, Pieter Abbeel, Alessandro Lazaric, and Lerrel Pinto.
\newblock Don't change the algorithm, change the data: Exploratory data for offline reinforcement learning.
\newblock \emph{arXiv preprint arXiv:2201.13425}, 2022.

\bibitem[Yu et~al.(2020)Yu, Thomas, Yu, Ermon, Zou, Levine, Finn, and Ma]{yu2020mopo}
Tianhe Yu, Garrett Thomas, Lantao Yu, Stefano Ermon, James~Y Zou, Sergey Levine, Chelsea Finn, and Tengyu Ma.
\newblock Mopo: Model-based offline policy optimization.
\newblock \emph{Advances in Neural Information Processing Systems}, 33:\penalty0 14129--14142, 2020.

\bibitem[Yu et~al.(2021)Yu, Kumar, Rafailov, Rajeswaran, Levine, and Finn]{yu2021combo}
Tianhe Yu, Aviral Kumar, Rafael Rafailov, Aravind Rajeswaran, Sergey Levine, and Chelsea Finn.
\newblock Combo: Conservative offline model-based policy optimization.
\newblock \emph{Advances in neural information processing systems}, 34:\penalty0 28954--28967, 2021.

\bibitem[Yuan et~al.(2022)Yuan, Muralidharan, Nandy, Cheng, and Prabhakar]{yuan2022offline}
Yiping Yuan, Ajith Muralidharan, Preetam Nandy, Miao Cheng, and Prakruthi Prabhakar.
\newblock Offline reinforcement learning for mobile notifications.
\newblock In \emph{Proceedings of the 31st ACM International Conference on Information \& Knowledge Management}, pp.\  3614--3623, 2022.

\bibitem[Yue et~al.(2022)Yue, Kang, Ma, Xu, Huang, and Yan]{yue2022boosting}
Yang Yue, Bingyi Kang, Xiao Ma, Zhongwen Xu, Gao Huang, and Shuicheng Yan.
\newblock Boosting offline reinforcement learning via data rebalancing.
\newblock \emph{arXiv preprint arXiv:2210.09241}, 2022.

\bibitem[Zhou et~al.(2023)Zhou, Ke, Srinivasa, Gupta, Rajeswaran, and Kumar]{zhou2023real}
Gaoyue Zhou, Liyiming Ke, Siddhartha Srinivasa, Abhinav Gupta, Aravind Rajeswaran, and Vikash Kumar.
\newblock Real world offline reinforcement learning with realistic data source.
\newblock In \emph{2023 IEEE International Conference on Robotics and Automation (ICRA)}, pp.\  7176--7183. IEEE, 2023.

\end{thebibliography}
\bibliographystyle{iclr2025_conference}

\newpage
\appendix
\onecolumn
\clearpage
\section{Proofs of theoretical analysis}
{\subsection{Notations}}

\begin{table}[ht]
    \centering
    {\begin{tabular}{ll}
    \toprule
    Notation & Explanation \\
    \midrule
    \hspace{0.3cm} $U_\mathtt{TD}$ & Bound of TD Loss \\
    \hspace{0.3cm} $U_{\nabla Q}$ & Bound of Gradient \\
    \hspace{0.3cm} $U_{\nabla a}$ & Bound of Gradient \\
    \hspace{0.3cm} $U_a$ & Bound of Action Difference \\
    \hspace{0.3cm} $U_\pi$ & Bound of Action \\
    \hspace{0.3cm} $\mathcal{D}$ & Complete Dataset \\
    \hspace{0.3cm} $\mathcal{S}$ & Reduced Dataset \\
    \hspace{0.3cm} $N$ & Size of Reduced Dataset \\
    \hspace{0.3cm} $\lambda$ & Minimum Eigenvalues \\
    \hspace{0.3cm} $\mathcal{C}$ & Cluster \\
    \hspace{0.3cm} $G$ & Total Training Steps \\
    \hspace{0.3cm} $\epsilon$ & Gradient Approximation Errors \\
    \hspace{0.3cm} $\theta_t$ & Updated parameter at the $t^{th}$ epoch \\
    \hspace{0.3cm} $\theta_t^*$ & Optimal model parameter \\
    \bottomrule
    \end{tabular}}
    {\caption{Organization of the notations used througout this paper}}
    \label{tab: notation}
\end{table}

\subsection{Submodular}
\label{appendix: submodular}

\submodular*

\begin{proof}
As mentioned in Section \ref{sec: preliminary}, we use the TD3+BC algorithm as the basic offline RL algorithm. 
TD3+BC follows the actor-critic framework, which trains policy and value networks separately. 
For a single sample $(s_i,a_i,r_i,s'_i)$, the loss of the value network is also named as TD error, which is defined by:
\begin{align}
    & \mathcal L_{Q}^i(\theta) = (y_i - Q_\theta(s_i,a_i))^2   \\
    & \text{where}\quad y_i = r_i + \gamma Q_{\theta'}(s'_i,\pi_{\phi'}(s'_i)+\epsilon)  \\
\end{align}

The gradient is:

\begin{align}
    -\frac{1}{2} \nabla_{\theta} \mathcal L^i_Q(\theta)=(y_i- Q_\theta(s_i,a_i))\nabla_\theta Q_\theta(s_i,a_i)
    \label{eq: td_gradient}
\end{align}

Offline RL algorithms attempt to minimize the TD error and compute the Q-value through a neural network.
Therefore, we assume the upper bound of the TD error is $\max_i\|y_i- Q_\theta(s_i,a_i)\|_2\leq U_\mathtt{TD}$.
The upper bound of the gradient of the value network is $\max_i \|\nabla_\theta Q_\theta(s_i,a_i)\|_2\leq U_{\nabla Q}$.
Then, Equation~\ref{eq: td_gradient} can be transformed into:
\begin{equation}
    \|\nabla_\theta \mathcal L^i_Q(\theta)\|_2 \leq 2U_\mathtt{TD} U_{\nabla Q}
\end{equation}

Similarly, for a single sample$(s_i,a_i,r_i,s'_i)$, the loss of the policy network is
\begin{align}
    \mathcal L_{\pi}^i(\phi) &= -\frac{1}{\alpha} Q_\theta(s_i, \pi_{\phi}(s_i))+\|\pi_{\phi}(s_i)-a_i\|_2^2 \\
\end{align}

The gradient is:

\begin{align}
    \nabla_{\phi} \mathcal L_{\pi}^i(\phi) &= \frac{\partial \mathcal L_{\pi}^i(\phi)}{\partial \pi_{\phi}(s_i)}\times \frac{\partial \pi_{\phi}(s_i)}{\partial \phi}   \\
    &= [-\frac{1}{\alpha} \nabla_{\pi_{\phi}(s_i)}Q_\theta(s_i,\pi_{\phi}(s_i))+2(\pi_{\phi}(s_i)-a_i)^\top \pi_{\phi}(s_i)] \times \nabla_\phi \pi_{\phi}(s_i)
    \label{eq: policy_gradient}
\end{align}

Here $\alpha$ is used to balance the conservatism and generalization in Offline RL, which is defined by:

\begin{align}
    \alpha= \frac{\mathbb{E}_{(s_i,a_i)}[|Q(s_i,a_i)|]}{\kappa}
\end{align}

where $\kappa$ is a hyper-parameter in TD3+BC.
Note that although $\alpha$ includes $Q$, it is not differentiated over. 

Offline RL algorithms attempt to limit the deviation of the current learned policy from the behavior policy while maximizing the Q-value of the optimized policy.
Therefore, we assume the upper bound of the gradient of the value network is $\max_i\|\nabla_{\pi_{\phi}(s_i)}Q_\theta(s_i,\pi_{\phi}(s_i))\|_2 \leq U_{\nabla a}$.
The upper bound of the action error is $\max_i\|\pi_{\phi}(s_i)-a_i\|_2\leq U_a$.
The upper bound of the output of the policy is $\max_i\|\pi_{\phi}(s_i)\|_2 \leq U_\pi$.
The upper bound of the gradient of the policy network is
$\max_i\|\nabla_\phi \pi_{\phi}(s_i)\|_2\leq U_{\nabla \pi}$.

Then, Equation~\ref{eq: policy_gradient} can be bound:
\begin{equation}
    \|\nabla_\phi \mathcal L_{\pi}^i(\phi)\|_2 \leq (U_{\nabla a}/\alpha+2U_a U_\pi)U_{\nabla \pi}
\end{equation}

We can define two functions $l_Q(\mathbf{\beta}), l_\pi(\mathbf{\beta}): \mathbb{R}^{|\mathcal{D}|} \rightarrow \mathbb{R}$
\begin{equation}
\begin{aligned}
    l_Q(\mathbf{\beta}) &= -\|\sum_{i=1}^{{|\mathcal{D}|}} \beta_i\nabla_\theta \mathcal L_Q^i(\theta)-\nabla_\theta \mathcal L(\theta)\|_2 - \lambda\|\beta\|_2^2 \\
    l_\pi(\mathbf{\beta}) &= -\|\sum_{i=1}^{{|\mathcal{D}|}} \beta_i\nabla_\phi \mathcal L^i_\pi(\phi)-\nabla_\phi \mathcal L(\phi)\|_2 - \lambda\|\beta\|_2^2
\end{aligned}
\end{equation}

We assume $\beta$ is a $N$-sparse vector that is 0 on all but $N$ indices.
Then we can transform maximizing $F^Q_\lambda(\mathcal{S}), F^\pi_\lambda(\mathcal{S})$ into maximizing $l(\beta)-l(\mathbf{0})$:
\begin{equation}
\begin{aligned}
    \max_{\mathcal{S}:|\mathcal{S}| \leq N} F^Q_\lambda(\mathcal{S}) &\xleftrightarrow{} \max_{\substack{\beta:\beta_{S^c=0} \\|\mathcal{S}|\leq N}} l_Q(\mathbf{\beta})-l_Q(\mathbf{0}) \\
    \max_{\mathcal{S}:|\mathcal{S}| \leq N} F^\pi_\lambda(\mathcal{S}) &\xleftrightarrow{} \max_{\substack{\beta:\beta_{S^c=0} \\|\mathcal{S}|\leq N}} l_\pi(\mathbf{\beta})-l_\pi(\mathbf{0})
\end{aligned}
\end{equation}
where ${S^c}$ means the complementary set of $S$, and $\beta_{S^c}=0$ means $\beta$ is 0 on all but indices $i$ that $i \in S$. 
$l(\mathbf{0})$ means the value of $l(\cdot)$ when input is zero vector $\mathbf{0}$, it serves as a basic value.
Since $l_Q(\beta)\leq 0, l_\pi(\beta) \leq 0$,  we can easily find that the minimum eigenvalues of $-l_Q(\beta)$ and $-l_\pi(\beta)$ are both at least $\lambda$. 

Next, the maximum eigenvalues of $-l_Q(\beta)$ and $-l_\pi(\beta)$ are
\begin{equation}
\begin{aligned}
\Lambda_{\max}(-l_Q(\beta))&=
\lambda+\operatorname{Trace}\left(\left[\begin{array}{c}
\beta_1\nabla_\theta \mathcal L_Q^{1 \top}\left(\theta\right) \\
\beta_2\nabla_\theta \mathcal L_Q^{2 \top}\left(\theta\right) \\
\ldots\\
\beta_{|\mathcal{D}|}\nabla_\theta \mathcal L_Q^{|\mathcal{D}| \top}\left(\theta_t\right)
\end{array}\right]\left[\begin{array}{c}
\beta_1\nabla_\theta \mathcal L_Q^{1 \top}\left(\theta\right) \\
\beta_2\nabla_\theta \mathcal L_Q^{2 \top}\left(\theta\right) \\
\ldots \\
\beta_{|\mathcal{D}|}\nabla_\theta \mathcal L_Q^{|\mathcal{D}| \top}\left(\theta\right)
\end{array}\right]^{\top}\right)    \\
&=\lambda + \sum_{i=1}^{{|\mathcal{D}|}} \beta_i^2 \| \nabla_\theta \mathcal L_Q^{i}(\theta) \|^2\\ &\leq \lambda+4 N (U_\mathtt{TD}U_{\nabla Q})^2 \\
\Lambda_{\max}(-l_\pi(\beta))&\leq \lambda + N(U_{\nabla a}/\alpha+2U_a U_\pi)^2 U_{\nabla\pi}^2
\end{aligned}
\end{equation}

Following the Theorem~1 in \cite{elenberg2018restricted}, we can derive that $F_\lambda^Q(\mathcal{S})$ is $\delta$-weakly submodular with $\delta \geq \frac{\lambda}{\lambda+4 N (U_\mathtt{TD}U_{\nabla Q})^2}$. 
And $F_\lambda^\pi(\mathcal{S})$ is $\delta$-weakly submodular with $\delta \geq \frac{\lambda}{\lambda + N(U_{\nabla a}/\alpha+2U_a U_\pi)^2 U_{\nabla\pi}^2}$.
\end{proof}

\subsection{Upper Bound of Residual Error}
\label{appendix: cluster theory}

\cluster*

\begin{proof}
The residual error is no larger than the special case where all $w_i$ are $|\mathcal{D}|/|\mathcal{S}|$:
\begin{align}
\operatorname{Err}\left(\vw, \mathcal{S}, \mathcal L, \theta\right)\le\|\frac{|\mathcal D|}{|\mathcal S|}\sum_{i\in \mathcal S}\nabla_{\theta} \mathcal L^i\left(\theta\right) - \sum_{i\in \mathcal D}\nabla_{\theta} \mathcal L^i\left(\theta\right) \|_2. \nonumber
\end{align}
Using Jensen's inequality, we have
\begin{align}
\operatorname{Err}\left(\vw, \mathcal{S}, \mathcal L, \theta\right)\le\sum_{i\in \mathcal D} \|\nabla_{\theta} \mathcal L^i\left(\theta\right) - \frac{1}{|\mathcal S|}\sum_{s\in\mathcal S}\nabla_{\theta}\mathcal L^s\left(\theta\right) \|_2. \nonumber
\end{align}
According to the monotone property of submodular functions, adding more samples to $S^k$ reduces the residual error. We assume $S^k$ starts with the cluster center $\{c_k\}$, it follows that
    \begin{align}
    \operatorname{Err}&\left(\vw, \mathcal{S}, \mathcal L, \theta\right) \le \sum_{i\in \mathcal D}\|\nabla_{\theta} \mathcal L^i\left(\theta\right) - \nabla_{\theta}\mathcal L^{c_k}\left(\theta\right) \|_2\nonumber\\
    =& \sum_{i\in\mathcal D} \min_{c\in \mathcal C}\|\nabla_{\theta} \mathcal L^i\left(\theta\right) - \nabla_{\theta} \mathcal L^c\left(\theta\right) \|_2.\label{equ:cluster_obj}
\end{align}
Eq.~\ref{equ:cluster_obj} is exactly the optimization objective typical of the clustering problem.
\end{proof}

\subsection{Convergence Analysis}
\label{appendix: convergence}

\convergence*

\begin{proof}
    From the definition of Gradient Descent, we have:

    \begin{align}
    \nabla_{\theta} \mathcal L_{\rdcshort}(\theta_t)^T(\theta_t - \theta^*) &= \frac{1}{\alpha_t}(\theta_t-\theta_{t+1})^T(\theta_t-\theta^*) \\
    \nabla_{\theta} \mathcal L_{\rdcshort}(\theta_t)^T(\theta_t - \theta^*) &= \frac{1}{2\alpha_t}\left(\|\theta_t-\theta_{t+1}\|^2 + \|\theta_t-\theta^*\|^2 - \|\theta_{t+1}-\theta^*\|^2\right) \\
    \nabla_{\theta} \mathcal L_{\rdcshort}(\theta_t)^T(\theta_t - \theta^*) &= \frac{1}{2\alpha_t}\left(\|\alpha_t \nabla_{\theta} \mathcal L_{\rdcshort}(\theta_t)\|^2 + \|\theta_t-\theta^*\|^2 - \|\theta_{t+1}-\theta^*\|^2\right)
    \end{align}

    Then, we rewrite the function $\nabla_{\theta} \mathcal L_{\rdcshort}(\theta_t)^T(\theta_t - \theta^*)$ as follows:

    \begin{align}
        \nabla_{\theta} \mathcal L_{\rdcshort}(\theta_t)^T(\theta_t - \theta^*) = \nabla_{\theta} \mathcal L_{\rdcshort}(\theta_t)^T(\theta_t - \theta^*) - \nabla_{\theta} \mathcal{L}(\theta_t)^T(\theta_t - \theta^*) + \nabla_{\theta} \mathcal{L}(\theta_t)^T(\theta_t - \theta^*)
    \end{align}

    Combining the above equations we have:

    \begin{align}
    \nabla_{\theta} \mathcal L_{\rdcshort}(\theta_t)^T(\theta_t - \theta^*) - \nabla_{\theta} \mathcal{L}(\theta_t)^T(\theta_t - \theta^*) + \nabla_{\theta} \mathcal{L}(\theta_t)^T(\theta_t - \theta^*) = \\
    \frac{1}{2\alpha_t}\left(\|\alpha_t\nabla_{\theta} \mathcal L_{\rdcshort}(\theta_t)\|^2 + \|\theta_t - \theta^*\|^2 - \|\theta_{t+1} - \theta^*\|^2\right)
    \end{align}

    \begin{align}
    \nabla_{\theta} \mathcal{L}(\theta_t)^T(\theta_t - \theta^*) =
    \frac{1}{2\alpha_t}\left(\|\alpha_t\nabla_{\theta} \mathcal L_{\rdcshort}(\theta_t)\|^2 + \|\theta_t - \theta^*\|^2 - \|\theta_{t+1} - \theta^*\|^2\right) - \\ (\nabla_{\theta} \mathcal L_{\rdcshort}(\theta_t) - \nabla_{\theta} \mathcal{L}(\theta_t))^T(\theta_t - \theta^*)
    \end{align}

    Summing up the above equation for different value of $t\in [0,G-1]$ and the learning rate $\alpha_t$ is a constant $\alpha$, then we have:

    \begin{align}
    \sum_{t=0}^{G-1} \nabla_{\theta} \mathcal{L}(\theta_t)^T(\theta_t - \theta^*) = \frac{1}{2\alpha} \|\theta_0 - \theta^*\|^2 - \|\theta_G - \theta^*\|^2 + \sum_{t=0}^{G-1}\left(\frac{1}{2\alpha}\|\alpha\nabla_{\theta} \mathcal L_{\rdcshort}(\theta_t)\|^2\right) \\
    + \sum_{t=0}^{G-1}\left((\nabla_{\theta} \mathcal L_{\rdcshort}(\theta_t) - \nabla_{\theta} \mathcal{L}(\theta_t) )^T(\theta_t - \theta^*)\right)
    \end{align}

    Since $\|\theta_G - \theta^*\|^2 \geq 0$, we have:

    \begin{align}
        \sum_{t=0}^{G-1} \nabla_{\theta} \mathcal{L}(\theta_t)^T(\theta_t - \theta^*) \leq \frac{1}{2\alpha} \|\theta_0 - \theta^*\|^2 + \sum_{t=0}^{G-1}\left(\frac{1}{2\alpha}\|\alpha\nabla_{\theta} \mathcal L_{\rdcshort}(\theta_t)\|^2\right) \\
        + \sum_{t=0}^{G-1}\left((\nabla_{\theta} \mathcal L_{\rdcshort}(\theta_t) - \nabla_{\theta} \mathcal{L}(\theta_t) )^T(\theta_t - \theta^*)\right)
        \label{eq: sum}
    \end{align}

From the convexity of function $\mathcal{L}(\theta)$, we have:

\begin{align}
    \mathcal{L}(\theta_t) - \mathcal{L}(\theta^*) \leq \nabla_{\theta} \mathcal{L}(\theta_t)^T(\theta_t - \theta^*)
    \label{eq: convexity}
\end{align}

Combining the Equation~\ref{eq: sum} and Equation~\ref{eq: convexity}, we have:

\begin{align}
    \sum_{t=0}^{G-1}\mathcal{L}(\theta_t) - \mathcal{L}(\theta^*) \leq \frac{1}{2\alpha} \|\theta_0 - \theta^*\|^2 + \sum_{t=0}^{G-1}\left(\frac{1}{2\alpha}\|\alpha\nabla_{\theta} \mathcal L_{\rdcshort}(\theta_t)\|^2\right) \\
    + \sum_{t=0}^{G-1}\left((\nabla_{\theta} \mathcal L_{\rdcshort}(\theta_t) - \nabla_{\theta} \mathcal{L}(\theta_t))^T(\theta_t - \theta^*)\right)
\end{align}

We assume that $\|\theta - \theta^*\|\leq D$.
Since $\| \nabla \mathcal L(\theta) \| \leq \sigma$, we have:

\begin{align}
    \sum_{t=0}^{G-1}\mathcal{L}(\theta_t) - \mathcal{L}(\theta^*) \leq \frac{D^2}{2\alpha} + \frac{G\alpha\sigma^2}{2}
    + \sum_{t=0}^{G-1}D(\|\nabla_{\theta} \mathcal L_{\rdcshort}(\theta_t) - \nabla_{\theta} \mathcal{L}(\theta_t)\|)
\end{align}

Then:

\begin{align}
    \frac{\sum_{t=0}^{G-1}\mathcal{L}(\theta_t) - \mathcal{L}(\theta^*)}{G} \leq \frac{D^2}{2\alpha G} + \frac{\alpha\sigma^2}{2}
    + \sum_{t=0}^{G-1}\frac{D}{G}(\|\nabla_{\theta} \mathcal L_{\rdcshort}(\theta_t) - \nabla_{\theta} \mathcal{L}(\theta_t)\|)
\end{align}

Since $\min(\mathcal{L}(\theta_t) - \mathcal{L}(\theta^*))\leq \frac{\sum_{t=0}^{G-1}\mathcal{L}(\theta_t) - \mathcal{L}(\theta^*)}{G}$, we have:

\begin{align}
    \min(\mathcal{L}(\theta_t) - \mathcal{L}(\theta^*))\leq \frac{D^2}{2\alpha G} + \frac{\alpha\sigma^2}{2}
    + \sum_{t=0}^{G-1}\frac{D}{G}(\|\nabla_{\theta} \mathcal L_{\rdcshort}(\theta_t) - \nabla_{\theta} \mathcal{L}(\theta_t)\|)
\end{align}

We adopt $\varepsilon$ to denote $\|\nabla_{\theta} \mathcal L_{\rdcshort}(\theta_t) - \nabla_{\theta} \mathcal{L}(\theta_t)\|$, then we have:

\begin{align}
    \min(\mathcal{L}(\theta_t) - \mathcal{L}(\theta^*))\leq \frac{D^2}{2\alpha G} + \frac{\alpha\sigma^2}{2}
    + \sum_{t=0}^{G-1}\frac{D}{G}\varepsilon
\end{align}
     
\end{proof}

\begin{theorem}\label{thm:monotone}
    The training loss on original dataset always monotonically decreases with every training epoch $t$, $\mathcal{L}(\theta_{t+1}) \leq \mathcal{L}(\theta_t)$ if it satisfies the condition that $\nabla_{\theta} \mathcal{L}(\theta_t)^T\nabla_{\theta} \mathcal L_{\rdcshort}(\theta_t) \geq 0$ for $0\leq t \leq G$ and the learning rate $\alpha \leq \min_{t} \frac{2}{L}\frac{\nabla_{\theta} \mathcal{L}(\theta_t)^T\nabla_{\theta} \mathcal L_{\rdcshort}(\theta_t)}{\nabla_{\theta} \mathcal L_{\rdcshort}(\theta_t)^T\nabla_{\theta} \mathcal L_{\rdcshort}(\theta_t)}$.
\end{theorem}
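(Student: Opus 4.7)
The plan is to invoke the standard descent lemma for $L$-Lipschitz smooth functions applied to $\mathcal{L}$ at the gradient descent update, which uses the reduced-dataset gradient $\nabla_\theta \mathcal{L}_{\rdcshort}(\theta_t)$ rather than $\nabla_\theta \mathcal{L}(\theta_t)$. Since $\mathcal L$ has an $L$-Lipschitz gradient, I would write
\begin{align}
\mathcal L(\theta_{t+1}) \leq \mathcal L(\theta_t) + \nabla_\theta \mathcal L(\theta_t)^T(\theta_{t+1}-\theta_t) + \tfrac{L}{2}\|\theta_{t+1}-\theta_t\|^2.
\end{align}

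Next, I would substitute the GD update $\theta_{t+1}-\theta_t = -\alpha\, \nabla_\theta \mathcal L_{\rdcshort}(\theta_t)$ into this inequality to obtain
\begin{align}
\mathcal L(\theta_{t+1}) - \mathcal L(\theta_t) \leq -\alpha\, \nabla_\theta \mathcal L(\theta_t)^T \nabla_\theta \mathcal L_{\rdcshort}(\theta_t) + \tfrac{L\alpha^2}{2}\|\nabla_\theta \mathcal L_{\rdcshort}(\theta_t)\|^2.
\end{align}
The right-hand side is non-positive exactly when $\alpha \leq \tfrac{2}{L}\cdot \tfrac{\nabla_\theta \mathcal L(\theta_t)^T \nabla_\theta \mathcal L_{\rdcshort}(\theta_t)}{\nabla_\theta \mathcal L_{\rdcshort}(\theta_t)^T\nabla_\theta \mathcal L_{\rdcshort}(\theta_t)}$, which is precisely the stated learning-rate bound. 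Note that the hypothesis $\nabla_\theta \mathcal L(\theta_t)^T \nabla_\theta \mathcal L_{\rdcshort}(\theta_t) \geq 0$ is needed to guarantee that this upper bound on $\alpha$ is non-negative (and hence that a valid learning rate exists), and to guarantee the linear term is a true descent term rather than an ascent one.

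Finally, I would take the minimum of that bound over $0\le t\le G$, which is precisely the assumption made on $\alpha$ in the theorem. This immediately gives $\mathcal L(\theta_{t+1}) \leq \mathcal L(\theta_t)$ for every epoch, completing the argument.

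The proof is essentially a one-line application of the smoothness descent lemma, so I do not anticipate any real obstacle. The only subtlety worth highlighting is the role of the positivity condition $\nabla_\theta \mathcal L(\theta_t)^T \nabla_\theta \mathcal L_{\rdcshort}(\theta_t) \geq 0$: this is exactly the condition ensuring that the reduced-gradient direction $-\nabla_\theta \mathcal L_{\rdcshort}(\theta_t)$ is a descent direction for the full loss $\mathcal L$. Without it, no positive learning rate could guarantee monotone descent, and this is where the gradient approximation quality established in Section~\ref{sec: offline omp} becomes relevant, since a small $\operatorname{Err}(\vw,\mathcal S,\mathcal L,\theta_t)$ forces alignment between the two gradients and hence makes the inner product positive whenever $\|\nabla_\theta\mathcal L(\theta_t)\|$ is sufficiently large compared to the approximation error.
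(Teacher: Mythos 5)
Your proposal is correct and follows essentially the same route as the paper's proof: apply the descent lemma from $L$-Lipschitz smoothness of $\mathcal{L}$, substitute the update $\theta_{t+1}-\theta_t=-\alpha\nabla_\theta\mathcal{L}_{\rdcshort}(\theta_t)$, and read off the learning-rate condition that makes the right-hand side non-positive. Your closing remark connecting the positivity condition to the gradient-approximation error is a nice addition not present in the paper, but the core argument is identical.
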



\begin{proof}
    Since the training loss $\mathcal{L}(\theta)$ is lipschitz smooth, we have:
\begin{align}
    \mathcal{L}(\theta_{t+1}) & \leq \mathcal{L}(\theta_t) + \nabla_{\theta} \mathcal{L}(\theta_t)^T\Delta \theta + \frac{L}{2}\|\Delta\theta\|^2, \\
    &\text{where} \qquad \Delta\theta=\theta_{t+1} - \theta_{t}.
\end{align}

Since, we are using SGD to optimize the reduced subset training loss $\mathcal L_{\rdcshort}(\theta_t)$ model parameters.
The update equation is:

\begin{align}
    \theta_{t+1} = \theta_{t} - \alpha \nabla_{\theta} \mathcal L_{\rdcshort}(\theta_t)
\end{align}

Combining the above two equations, we have:
\begin{align}
    \mathcal{L}(\theta_{t+1}) \leq \mathcal{L}(\theta_t) + \nabla_{\theta} \mathcal{L}(\theta_t)^T(- \alpha \nabla_{\theta} \mathcal L_{\rdcshort}(\theta_t)) + \frac{L}{2}\|- \alpha \nabla_{\theta} \mathcal L_{\rdcshort}(\theta_t)\|^2
\end{align}

Next, we have:

\begin{align}
    \mathcal{L}(\theta_{t+1}) - \mathcal{L}(\theta_t) \leq \nabla_{\theta} \mathcal{L}(\theta_t)^T(- \alpha \nabla_{\theta} \mathcal L_{\rdcshort}(\theta_t)) + \frac{L}{2}\|- \alpha \nabla_{\theta} \mathcal L_{\rdcshort}(\theta_t)\|^2
\end{align}

From the above equation, we have:

\begin{align}
    \mathcal{L}(\theta_{t+1}) \leq \mathcal{L}(\theta_{t}), \quad  \text{if} \quad \nabla_{\theta} \mathcal{L}(\theta_{t})^T\nabla_{\theta} \mathcal L_{\rdcshort}(\theta_t) 
    - \frac{\alpha L}{2}\|\nabla_{\theta} \mathcal L_{\rdcshort}(\theta_t)\|^2 \geq 0
\end{align}

Since $\|\nabla_{\theta} \mathcal L_{\rdcshort}(\theta_t)\|^2\geq 0$, we will have the necessary condition $\nabla_{\theta} \mathcal{L}(\theta_{t})^T\nabla_{\theta} \mathcal L_{\rdcshort}(\theta_t) \geq 0$.
Next, we rewrite the above condition as follows:

\begin{align}
    \nabla_{\theta} \mathcal{L}(\theta_{t})^T\nabla_{\theta} \mathcal L_{\rdcshort}(\theta_t) \geq \frac{\alpha L}{2}\|\nabla_{\theta} \mathcal L_{\rdcshort}(\theta_t)\|^2
\end{align}

Therefore, the necessary condition for the learning rate $\alpha$ is:

\begin{align}
    \alpha \leq \frac{2}{L}\frac{\nabla_{\theta} \mathcal{L}(\theta_t)^T\nabla_{\theta} \mathcal L_{\rdcshort}(\theta_t)}{\nabla_{\theta} \mathcal L_{\rdcshort}(\theta_t)^T\nabla_{\theta} \mathcal L_{\rdcshort}(\theta_t)}
\end{align}

Since the above condition needs to be true for all values for $t$, we have the following conditions for the learning rate:

\begin{align}
    \alpha \leq \min_{t} \frac{2}{L}\frac{\nabla_{\theta} \mathcal{L}(\theta_t)^T\nabla_{\theta} \mathcal L_{\rdcshort}(\theta_t)}{\nabla_{\theta} \mathcal L_{\rdcshort}(\theta_t)^T\nabla_{\theta} \mathcal L_{\rdcshort}(\theta_t)}
\end{align}

\end{proof}
\clearpage
\section{Ablation Study}
\label{sec: ablation}
To study the contribution of each component in our learning framework, we conduct the following ablation study. 
\nameq: We replace the empirical returns used to update Q functions with the standard target Q function in the TD loss function. 
\namei: We set the number of data selection rounds to 1 and study the function of multi-round data selection.
The experimental results in Figure~\ref{fig: modular ablation}show that removing any of these two modules will worsen the performance of \name. In case like $\texttt{walker2d-medium}$, ablation \namei~even decrease the performance by over 80\%, and ablation \nameq~results in a 95\% performance drop in $\texttt{walker2d-expert}$. Furthermore, we also find that in the $\texttt{halfcheetah}$ tasks, the impact of removing the two modules is relatively small. This result can be attributable to the fact that this task has a limited state space, and we can directly apply OMP to the entire dataset and identify important and diverse data.

\begin{figure}[H]
    \centering
    \subfigure{\includegraphics[scale=0.27]{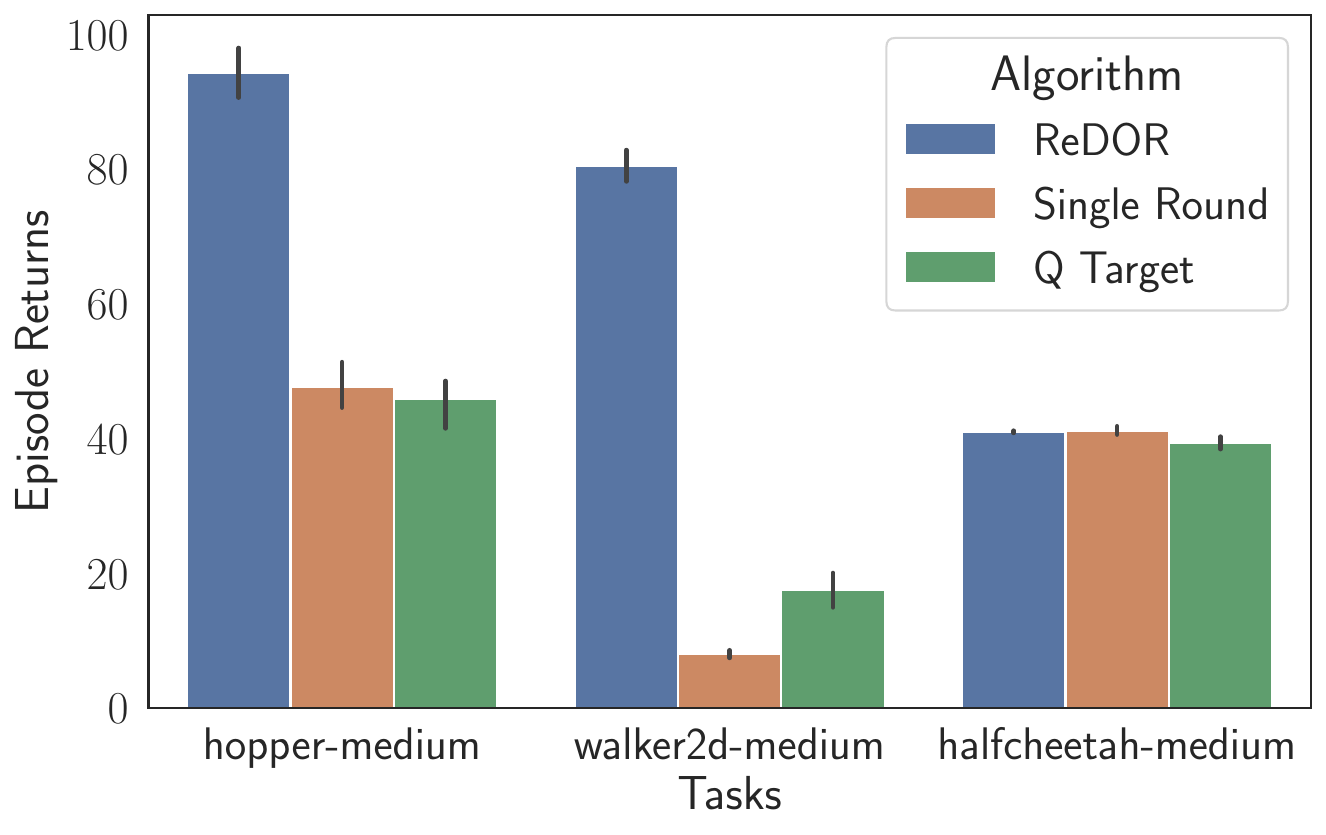}}
    \hspace{0.3cm}\subfigure{\includegraphics[scale=0.27]{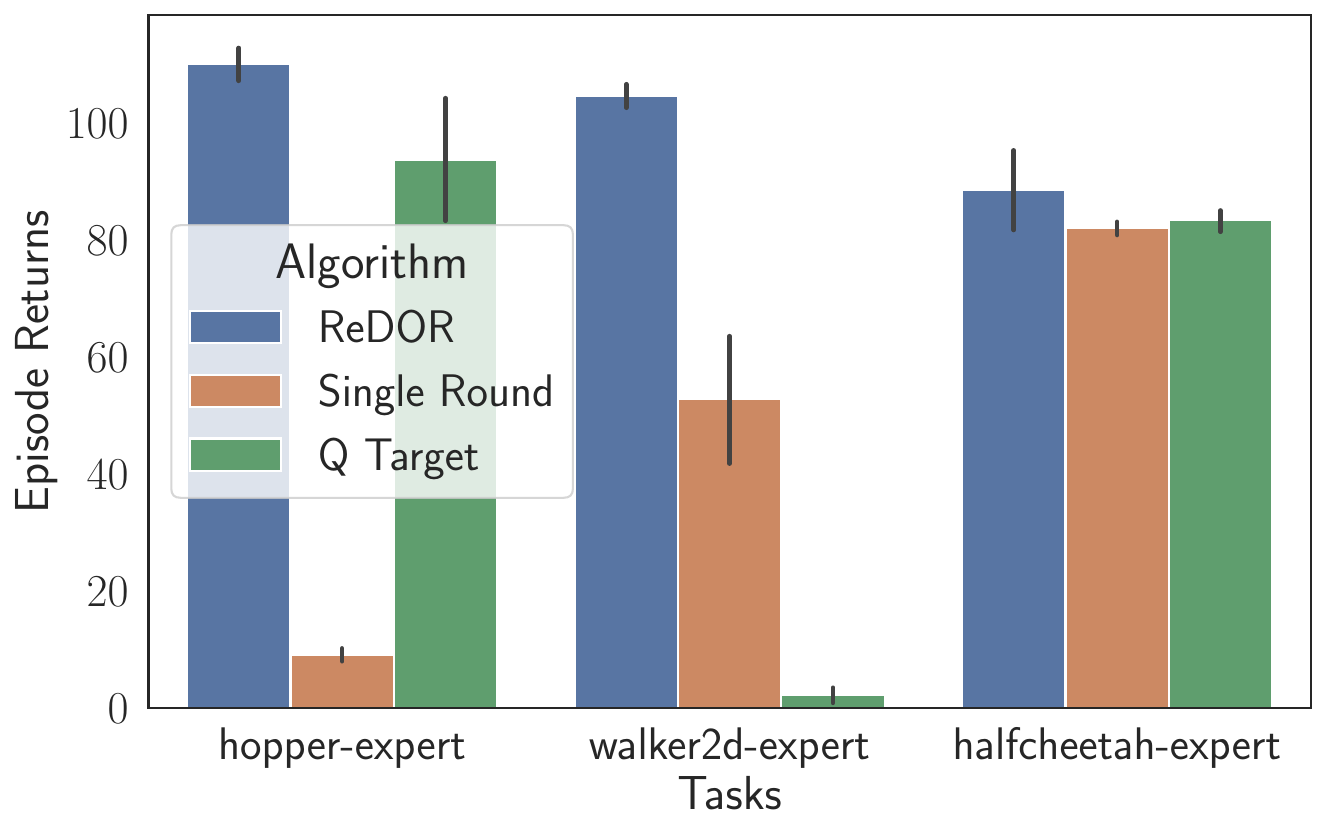}}
    \caption{Ablation results on D4RL~(Hard) tasks with the normalized score metric.}
    \label{fig: modular ablation}
\end{figure}

\section{Computational Complexity}
\label{appendix: computation complexity}
We report the computational overhead of \name~on various datasets. 
All experiments are conducted on the same computational device (GeForce RTX 3090 GPU). 
The results in the following Table indicate that even on datasets containing millions of data points, the computational overhead remains low. 
This low computational complexity can be attributed to the trajectory-based selection technique in Sec.~\ref{sec: offline omp}~(II) and the regularized constraint technique in Sec.~\ref{sec:method:outer}, making our method easily scalable to large-scale datasets. 

\begin{table*}[h]
    \centering
    \begin{tabular}{c|cc}
    \toprule
    Env & Data Number & \name \\
    \midrule
    Hopper-medium-v0 & 999981 & 8m \\
    Walker2d-medium-v0 & 999874 & 8m \\
    Halfcheetah-medium-v0 & 998999 & 8m \\
    Hopper-expert-v0 & 999034 & 8m \\
    Walker2d-expert-v0 & 999304 & 8m \\
    Halfcheetah-expert-v0 & 998999 & 8m\\
    Hopper-medium-expert-v0 & 1199953 & 8m\\
    Walker2d-medium-expert-v0 & 1999179  & 13m\\
    Halfcheetah-medium-expert-v0 & 1997998 & 14m\\
    Hopper-medium-replay-v0 & 200918 & 3m\\
    Walker2d-medium-replay-v0 & 100929  & 3m\\
    Halfcheetah-medium-replay-v0 & 100899 & 3m\\
    \bottomrule
    \end{tabular}
    \label{tab: cc}
    \caption{The computational complexity associated with \name~in various datasets. $m$ represents minutes.} 
\end{table*}

\clearpage
\subsection{Visualization Results}
\label{appendix: visual}
We visualize the selected data of ReDOR on various tasks based on the same method in Section~\ref{sec: exp}.

\begin{figure*}[ht]
    \centering
    \subfigure{\includegraphics[scale=0.4]{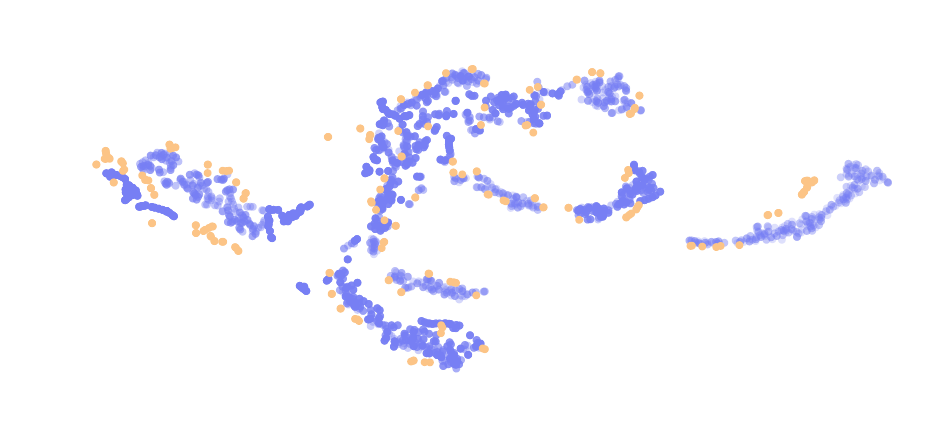}}
    \caption{Visualization of selected data on hopper-medium-v0.}
\end{figure*}

\begin{figure*}[ht]
    \centering
    \subfigure{\includegraphics[scale=0.4]{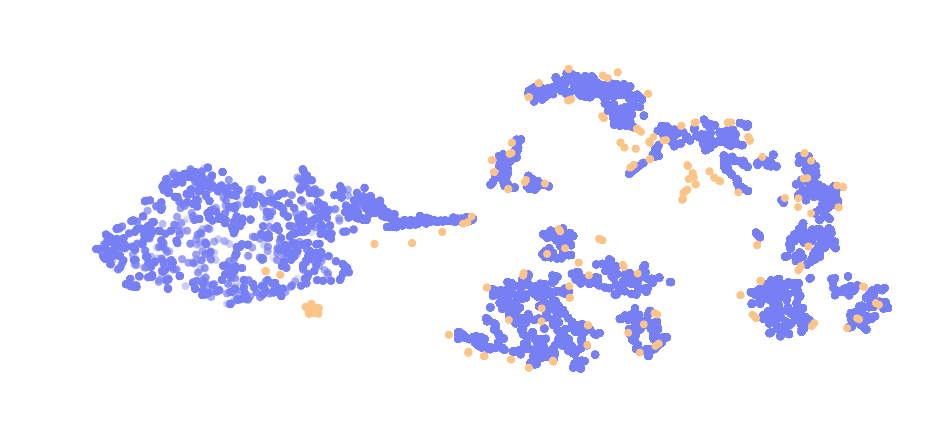}}
    \caption{Visualization of selected data on hopper-medium-expert-v0.}
\end{figure*}

\begin{figure*}[ht]
    \centering
    \subfigure{\includegraphics[scale=0.4]{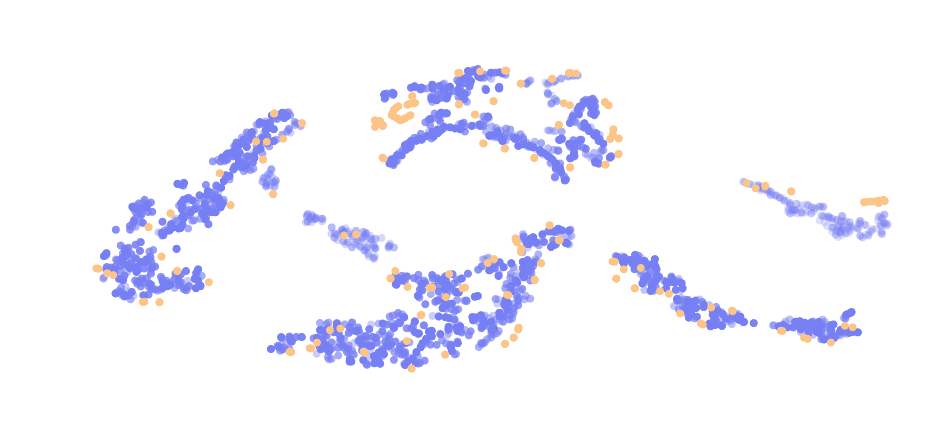}}
    \caption{Visualization of selected data on hopper-expert-v0.}
\end{figure*}

\begin{figure*}[ht]
    \centering
    \subfigure{\includegraphics[scale=0.4]{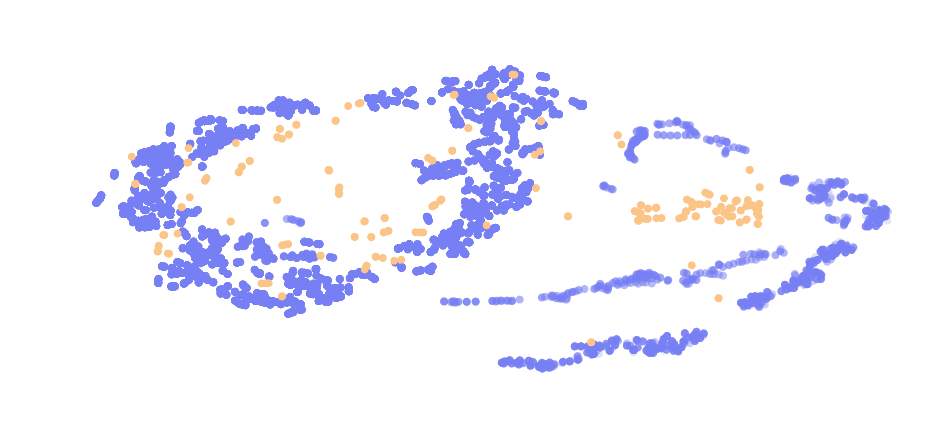}}
    \caption{Visualization of selected data on walker2d-medium-expert-v0.}
\end{figure*}

\begin{figure*}[ht]
    \centering
    \subfigure{\includegraphics[scale=0.4]{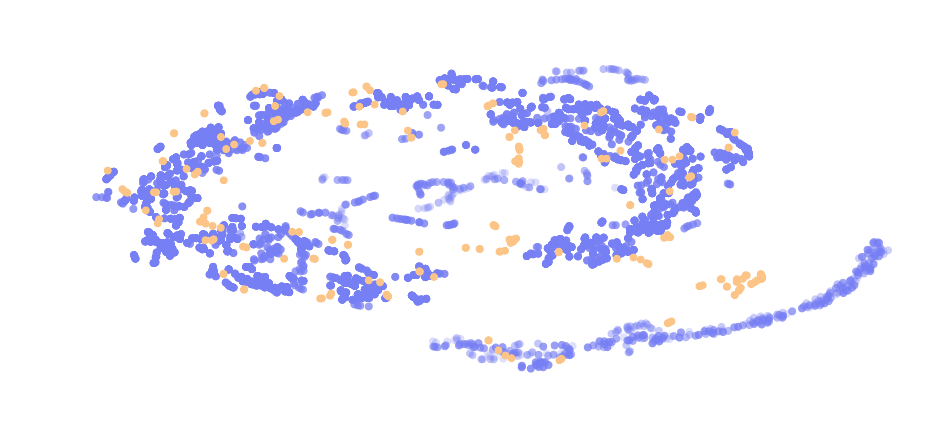}}
    \caption{Visualization of selected data on walker2d-expert-v0.}
\end{figure*}

\begin{figure*}[ht]
    \centering
    \subfigure{\includegraphics[scale=0.4]{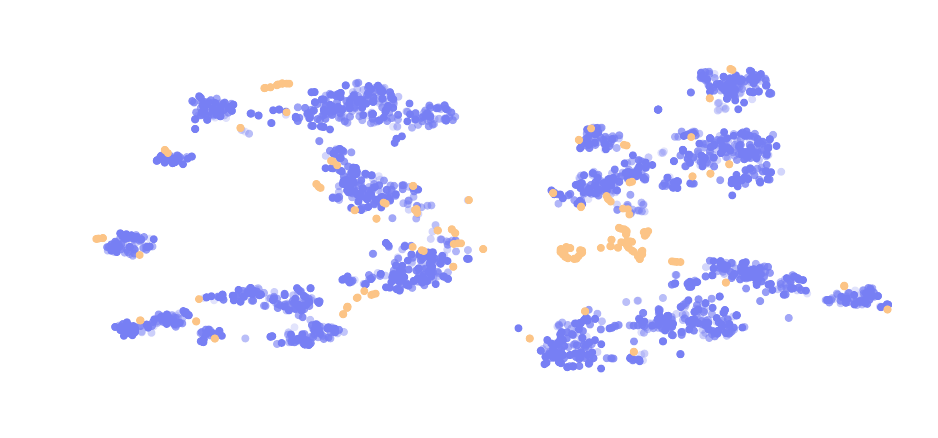}}
    \caption{Visualization of selected data on halfcheetah-medium-v0.}
\end{figure*}

\begin{figure*}[ht]
    \centering
    \subfigure{\includegraphics[scale=0.4]{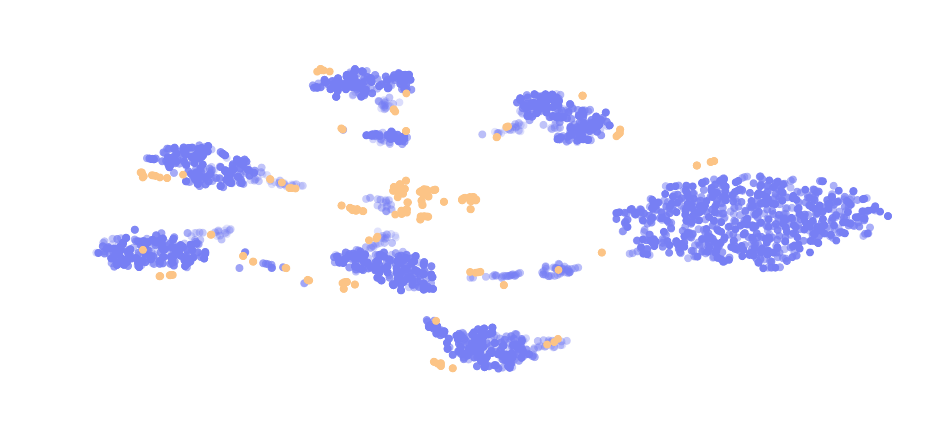}}
    \caption{Visualization of selected data on halfcheetah-medium-expert-v0.}
\end{figure*}

\begin{figure*}[ht]
    \centering
    \subfigure{\includegraphics[scale=0.4]{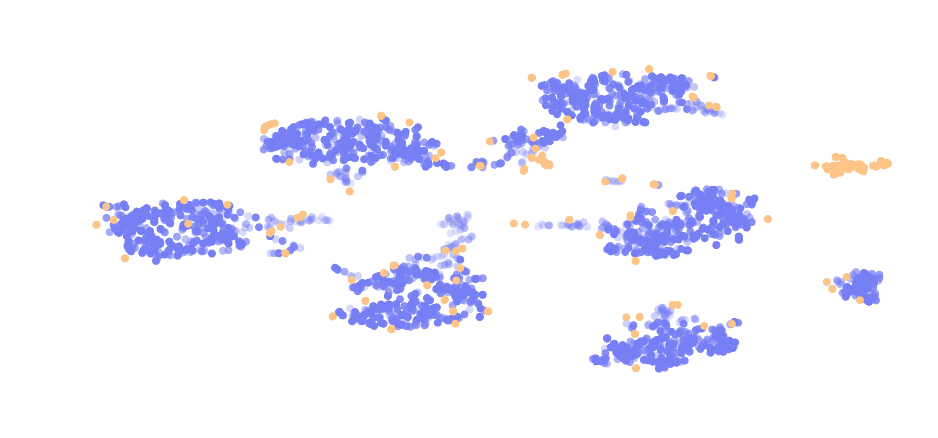}}
    \caption{Visualization of selected data on halfcheetah-expert-v0.}
\end{figure*}



\clearpage
\section{Experimental Details}
\label{appendix: exp details}

\paragraph{Hyper-parameters.}
For the Mujoco tasks, we adopt the TD3+BC as the backbone of the offline algorithms.
For the Antmaze tasks, we adopt the IQL as the backbone of the offline algorithms.
We outline the hyper-parameters used by \name~ in Table~\ref{tab: parameters mujoco}.

\begin{table}[ht]
    \centering
    \begin{tabular}{ll}
    \toprule
    Hyperparameter & Value \\
    \midrule
    \hspace{0.3cm} Optimizer & Adam \\
    \hspace{0.3cm} Critic learning rate & 3e-4 \\
    \hspace{0.3cm} Actor learning rate & 3e-4 \\
    \hspace{0.3cm} Mini-batch size & 256 \\
    \hspace{0.3cm} Discount factor & 0.99 \\
    \hspace{0.3cm} Target update rate & 5e-3 \\
    \hspace{0.3cm} Policy noise & 0.2 \\
    \hspace{0.3cm} Policy noise clipping & (-0.5, 0.5) \\
    \hspace{0.3cm} TD3+BC regularized parameter & 2.5 \\
    \midrule
    Architecture & Value \\
    \midrule
    \hspace{0.3cm} Critic hidden dim & 256 \\
    \hspace{0.3cm} Critic hidden layers & 2 \\
    \hspace{0.3cm} Critic activation function & ReLU \\
    \hspace{0.3cm} Actor hidden dim & 256 \\
    \hspace{0.3cm} Actor hidden layers & 2 \\
    \hspace{0.3cm} Actor activation function & ReLU \\
    \midrule
    \name~Parameters & Value \\
    \midrule
    \hspace{0.3cm} Training rounds $T$ & 50 \\
    \hspace{0.3cm} $m$ & 50 \\
    \hspace{0.3cm} $\epsilon$ & 0.01 \\
    \bottomrule
    \end{tabular}
    \caption{Hyper-parameters sheet of ~\name}
    \label{tab: parameters mujoco}
\end{table}

\end{document}